\newcolumntype{L}[1]{>{\raggedright\let\newline\\\arraybackslash\hspace{0pt}}m{#1}}
\newcolumntype{C}[1]{>{\centering\let\newline\\\arraybackslash\hspace{0pt}}m{#1}}
\newcolumntype{R}[1]{>{\raggedleft\let\newline\\\arraybackslash\hspace{0pt}}m{#1}}
\let\MYcaption\@makecaption
\let\@makecaption\MYcaption
\let\oldgls\gls
\let\oldglspl\glspl
\newcommand\fussy@ifnextchar[3]{%
	\let\reserved@d=#1%
	\def\reserved@a{#2}%
	\def\reserved@b{#3}%
	\futurelet\@let@token\fussy@ifnch}
\def\fussy@ifnch{%
	\ifx\@let@token\reserved@d
		\let\reserved@c\reserved@a
	\else
		\let\reserved@c\reserved@b
	\fi
	\reserved@c}
\renewcommand{\gls}[1]{%
\oldgls{#1}\fussy@ifnextchar.{\@checkperiod}{\@}}
\renewcommand{\glspl}[1]{%
\oldglspl{#1}\fussy@ifnextchar.{\@checkperiod}{\@}}
\newcommand{\@checkperiod}[1]{%
	\ifnum\sfcode`\.=\spacefactor\else#1\fi
}
\newacronym{wrt}{w.r.t.}{with respect to}
\newacronym{RHS}{R.H.S.}{right-hand side}
\newacronym{LHS}{L.H.S.}{left-hand side}
\newacronym{iid}{i.i.d.}{independent and identically distributed}
\newacronym{SOTA}{SOTA}{state-of-the-art}
\let\saved@bibitem\@bibitem\makeatother
\let\@bibitem\saved@bibitem\makeatother
\crefname{equation}{}{}
\Crefname{equation}{}{}
\crefname{claim}{claim}{claims}
\crefname{step}{step}{steps}
\crefname{line}{line}{lines}
\crefname{condition}{condition}{conditions}
\crefname{dmath}{}{}
\crefname{dseries}{}{}
\crefname{dgroup}{}{}
\crefname{Problem}{Problem}{Problems}
\crefname{Theorem}{Theorem}{Theorems}
\crefname{Corollary}{Corollary}{Corollaries}
\crefname{Proposition}{Proposition}{Propositions}
\crefname{Lemma}{Lemma}{Lemmas}
\crefname{Definition}{Definition}{Definitions}
\crefname{Example}{Example}{Examples}
\crefname{Assumption}{Assumption}{Assumptions}
\crefname{Remark}{Remark}{Remarks}
\crefname{Rem}{Remark}{Remarks}
\crefname{remarks}{Remarks}{Remarks}
\crefname{Appendix}{Appendix}{Appendices}
\crefname{Supplement}{Supplement}{Supplements}
\crefname{Exercise}{Exercise}{Exercises}
\crefname{Theorem_A}{Theorem}{Theorems}
\crefname{Corollary_A}{Corollary}{Corollaries}
\crefname{Proposition_A}{Proposition}{Propositions}
\crefname{Lemma_A}{Lemma}{Lemmas}
\crefname{Definition_A}{Definition}{Definitions}
		\let\Cref\crtCref
		\let\cref\crtcref
\def\cleartheorem#1{%
    \expandafter\let\csname#1\endcsname\relax
    \expandafter\let\csname c@#1\endcsname\relax
}
\def\clearthms#1{ \@for\tname:=#1\do{\cleartheorem\tname} }
		\newtheorem{Theorem}{Theorem}
		\newtheorem{Corollary}{Corollary}
		\newtheorem{Proposition}{Proposition}
		\newtheorem{Theorem}{Theorem}
\theoremstyle{remark}
\theoremstyle{plain}
\newcommand{\qednew}{\nobreak \ifvmode \relax \else
		\ifdim\lastskip<1.5em \hskip-\lastskip
			\hskip1.5em plus0em minus0.5em \fi \nobreak
		\vrule height0.75em width0.5em depth0.25em\fi}
\NewDocumentCommand{\movedownsub}{e{^_}}{%
	\IfNoValueTF{#1}{%
		\IfNoValueF{#2}{^{}}
	}{%
		^{#1}
	}%
	\IfNoValueF{#2}{_{#2}}
}
\newcommand{\Real}{\mathbb{R}}
\newcommand{\calE}{\mathcal{E}}
\newcommand{\calF}{\mathcal{F}}
\newcommand{\calG}{\mathcal{G}}
\newcommand{\calL}{\mathcal{L}}
\newcommand{\calV}{\mathcal{V}}
\newcommand{\calX}{\mathcal{X}}
\newcommand{\bA}{\mathbf{A}}
\newcommand{\bc}{\mathbf{c}}
\newcommand{\bC}{\mathbf{C}}
\newcommand{\bI}{\mathbf{I}}
\newcommand{\bL}{\mathbf{L}}
\newcommand{\bM}{\mathbf{M}}
\newcommand{\bu}{\mathbf{u}}
\newcommand{\bU}{\mathbf{U}}
\newcommand{\bV}{\mathbf{V}}
\newcommand{\bW}{\mathbf{W}}
\newcommand{\bx}{\mathbf{x}}
\newcommand{\bX}{\mathbf{X}}
\newcommand{\bY}{\mathbf{Y}}
\newcommand{\bz}{\mathbf{z}}
\newcommand{\bZ}{\mathbf{Z}}
\newcommand{\bbR}{\mathbb{R}}
\DeclareSymbolFont{bsfletters}{OT1}{cmss}{bx}{n}
\DeclareSymbolFont{ssfletters}{OT1}{cmss}{m}{n}
\DeclareMathSymbol{\bsfGamma}{0}{bsfletters}{'000}
\DeclareMathSymbol{\ssfGamma}{0}{ssfletters}{'000}
\DeclareMathSymbol{\bsfDelta}{0}{bsfletters}{'001}
\DeclareMathSymbol{\ssfDelta}{0}{ssfletters}{'001}
\DeclareMathSymbol{\bsfTheta}{0}{bsfletters}{'002}
\DeclareMathSymbol{\ssfTheta}{0}{ssfletters}{'002}
\DeclareMathSymbol{\bsfLambda}{0}{bsfletters}{'003}
\DeclareMathSymbol{\ssfLambda}{0}{ssfletters}{'003}
\DeclareMathSymbol{\bsfXi}{0}{bsfletters}{'004}
\DeclareMathSymbol{\ssfXi}{0}{ssfletters}{'004}
\DeclareMathSymbol{\bsfPi}{0}{bsfletters}{'005}
\DeclareMathSymbol{\ssfPi}{0}{ssfletters}{'005}
\DeclareMathSymbol{\bsfSigma}{0}{bsfletters}{'006}
\DeclareMathSymbol{\ssfSigma}{0}{ssfletters}{'006}
\DeclareMathSymbol{\bsfUpsilon}{0}{bsfletters}{'007}
\DeclareMathSymbol{\ssfUpsilon}{0}{ssfletters}{'007}
\DeclareMathSymbol{\bsfPhi}{0}{bsfletters}{'010}
\DeclareMathSymbol{\ssfPhi}{0}{ssfletters}{'010}
\DeclareMathSymbol{\bsfPsi}{0}{bsfletters}{'011}
\DeclareMathSymbol{\ssfPsi}{0}{ssfletters}{'011}
\DeclareMathSymbol{\bsfOmega}{0}{bsfletters}{'012}
\DeclareMathSymbol{\ssfOmega}{0}{ssfletters}{'012}
\newcommand{\bLambda}{\bm{\Lambda}}
\newcommand*\rel@kern[1]{\kern#1\dimexpr\macc@kerna}
\newcommand*\widebar[1]{%
  \begingroup
  \def\mathaccent##1##2{%
    \rel@kern{0.8}%
    \overline{\rel@kern{-0.8}\macc@nucleus\rel@kern{0.2}}%
    \rel@kern{-0.2}%
  }%
  \macc@depth\@ne
  \let\math@bgroup\@empty \let\math@egroup\macc@set@skewchar
  \mathsurround\z@ \frozen@everymath{\mathgroup\macc@group\relax}%
  \macc@set@skewchar\relax
  \let\mathaccentV\macc@nested@a
  \macc@nested@a\relax111{#1}%
  \endgroup
}
\DeclareMathOperator{\var}{var}
\DeclareMathOperator{\cov}{cov}
\newcommand{\ifbcdot}[1]{\ifblank{#1}{\cdot}{#1}}
\DeclarePairedDelimiterX\abs[1]{\lvert}{\rvert}{\ifbcdot{#1}}
\DeclarePairedDelimiterX\parens[1]{(}{)}{\ifbcdot{#1}}
\DeclarePairedDelimiterX\brk[1]{[}{]}{\ifbcdot{#1}}
\DeclarePairedDelimiterX\braces[1]{\{}{\}}{\ifbcdot{#1}}
\DeclarePairedDelimiterX\angles[1]{\langle}{\rangle}{\ifblank{#1}{\cdot,\cdot}{#1}}
\DeclarePairedDelimiterX\ip[2]{\langle}{\rangle}{\ifbcdot{#1},\ifbcdot{#2}}
\DeclarePairedDelimiterX\norm[1]{\lVert}{\rVert}{\ifbcdot{#1}}
\DeclarePairedDelimiterX\ceil[1]{\lceil}{\rceil}{\ifbcdot{#1}}
\DeclarePairedDelimiterX\floor[1]{\lfloor}{\rfloor}{\ifbcdot{#1}}
\DeclareFontFamily{U}{matha}{\hyphenchar\font45}
\DeclareFontShape{U}{matha}{m}{n}{
      <5> <6> <7> <8> <9> <10> gen * matha
      <10.95> matha10 <12> <14.4> <17.28> <20.74> <24.88> matha12
      }{}
\DeclareSymbolFont{matha}{U}{matha}{m}{n}
\DeclareFontFamily{U}{mathx}{\hyphenchar\font45}
\DeclareFontShape{U}{mathx}{m}{n}{
      <5> <6> <7> <8> <9> <10>
      <10.95> <12> <14.4> <17.28> <20.74> <24.88>
      mathx10
      }{}
\DeclareSymbolFont{mathx}{U}{mathx}{m}{n}
\DeclareMathDelimiter{\vvvert}{0}{matha}{"7E}{mathx}{"17}
\DeclarePairedDelimiterX\vertiii[1]{\vvvert}{\vvvert}{\ifbcdot{#1}}
\DeclarePairedDelimiterXPP\trace[1]{\operatorname{Tr}}{(}{)}{}{\ifbcdot{#1}} 
\DeclarePairedDelimiterXPP\col[1]{\operatorname{col}}{\{}{\}}{}{\ifbcdot{#1}} 
\DeclarePairedDelimiterXPP\row[1]{\operatorname{row}}{\{}{\}}{}{\ifbcdot{#1}} 
\DeclarePairedDelimiterXPP\erf[1]{\operatorname{erf}}{(}{)}{}{\ifbcdot{#1}}
\DeclarePairedDelimiterXPP\erfc[1]{\operatorname{erfc}}{(}{)}{}{\ifbcdot{#1}}
\DeclarePairedDelimiterXPP\KLD[2]{D}{(}{)}{}{\ifbcdot{#1}\, \delimsize\|\, \ifbcdot{#2}} 
\DeclarePairedDelimiterXPP\op[2]{\operatorname{#1}}{(}{)}{}{#2} 
\newcommand{\T}{^{\mkern-1.5mu\mathop\intercal}}
\newcommand{\ud}{\,\mathrm{d}} 
\DeclarePairedDelimiterXPP\indicate[1]{{\bf 1}}{\{}{\}}{}{\ifbcdot{#1}}
\NewDocumentCommand\ofrac{s m}{%
	\IfBooleanTF#1%
	{\dfrac{1}{#2}}%
	{\frac{1}{#2}}%
}
\NewDocumentCommand\ddfrac{s m m}{%
	\IfBooleanTF#1%
	{\dfrac{\mathrm{d} {#2}}{\mathrm{d} {#3}}}%
	{\frac{\mathrm{d} {#2}}{\mathrm{d} {#3}}}%
}
\NewDocumentCommand\ppfrac{s m m}{%
	\IfBooleanTF#1%
	{\dfrac{\partial {#2}}{\partial {#3}}}%
	{\frac{\partial {#2}}{\partial {#3}}}%
}
\providecommand\given{}
\DeclarePairedDelimiterX\Set[2]\{\}{%
\renewcommand\given{\SetSymbol[\delimsize]{#1}}
#2
}
\DeclarePairedDelimiterX\Setc[1]\{\}{%
\renewcommand\given{\SetSymbol{:}}
#1
}
\NewDocumentCommand\set{s o m}{%
	\IfBooleanTF#1%
	{\IfValueTF{#2}{\Set*{#2}{#3}}{\Setc*{#3}}}%
	{\IfValueTF{#2}{\Set{#2}{#3}}{\Setc{#3}}}%
}
\NewDocumentCommand{\evalat}{ s O{\big} m e{_^} }{%
\IfBooleanTF{#1}%
{\left. #3 \right|}{#3#2|}%
\IfValueT{#4}{_{#4}}%
\IfValueT{#5}{^{#5}}%
}
\providecommand\given{}
\DeclarePairedDelimiterXPP\cprob[1]{}(){}{
\renewcommand\given{\nonscript\,\delimsize\vert\allowbreak\nonscript\,\mathopen{}}%
\DeclarePairedDelimiterXPP\cexp[1]{}[]{}{
\renewcommand\given{\nonscript\,\delimsize\vert\allowbreak\nonscript\,\mathopen{}}%
#1%
}
\DeclareDocumentCommand \P { s e{_^} d() g } {%
	\mathbb{P}%
	\IfBooleanTF{#1}%
		{
			\IfValueT{#2}{_{#2}}%
			\IfValueT{#3}{^{#3}}%
			\IfValueTF{#5}{\cprob{#4 \given #5}}{\IfValueT{#4}{\cprob{#4}}}%
		}%
		{
			\IfValueT{#2}{_{#2}}%
			\IfValueT{#3}{^{#3}}%
			\IfValueTF{#5}{\cprob*{#4 \given #5}}{\IfValueT{#4}{\cprob*{#4}}}%
		}%
}
\DeclareDocumentCommand \E { s e{_^} o g } {%
	\mathbb{E}%
	\IfBooleanTF{#1}%
		{
			\IfValueT{#2}{_{#2}}%
			\IfValueT{#3}{^{#3}}%
			\IfValueTF{#5}{\cexp{#4 \given #5}}{\IfValueT{#4}{\cexp{#4}}}%
		}%
		{
			\IfValueT{#2}{_{#2}}%
			\IfValueT{#3}{^{#3}}%
			\IfValueTF{#5}{\cexp*{#4 \given #5}}{\IfValueT{#4}{\cexp*{#4}}}%
		}%
}
\DeclareDocumentCommand \Var { s e{_^} d() g } {%
	\var%
	\IfBooleanTF{#1}%
		{
			\IfValueT{#2}{_{#2}}%
			\IfValueT{#3}{^{#3}}%
			\IfValueTF{#5}{\cprob{#4 \given #5}}{\IfValueT{#4}{\cprob{#4}}}%
		}%
		{
			\IfValueT{#2}{_{#2}}%
			\IfValueT{#3}{^{#3}}%
			\IfValueTF{#5}{\cprob*{#4 \given #5}}{\IfValueT{#4}{\cprob*{#4}}}%
		}%
}
\DeclareDocumentCommand \Cov { s e{_^} d() g } {%
	\cov%
	\IfBooleanTF{#1}%
		{
			\IfValueT{#2}{_{#2}}%
			\IfValueT{#3}{^{#3}}%
			\IfValueTF{#5}{\cprob{#4 \given #5}}{\IfValueT{#4}{\cprob{#4}}}%
		}%
		{
			\IfValueT{#2}{_{#2}}%
			\IfValueT{#3}{^{#3}}%
			\IfValueTF{#5}{\cprob*{#4 \given #5}}{\IfValueT{#4}{\cprob*{#4}}}%
		}%
}
\NewDocumentCommand \dist {m o o} {%
\mathrm{#1}\left(%
	\IfValueT{#3}{%
		\tl_if_blank:nTF{ #3 }{\cdot\, \middle|\, }{#3\, \middle|\, }%
	}
	\IfValueT{#2}{#2}%
\right)%
}
\NewDocumentCommand {\cbrace} {t+ D[]{black} D(){\widthof{#5}} m m } {%
	\begingroup%
		\color{#2}
		\IfBooleanTF{#1}{%
			\overbrace{#4}^%
		}{
			\underbrace{#4}_%
		}%
		{\parbox[c]{#3}{\centering\footnotesize{#5}}}%
	\endgroup%
}
\let\oldforall\forall
\renewcommand{\forall}{\oldforall \, }
\let\oldexist\exists
\renewcommand{\exists}{\oldexist \, }
\newcommand{\rankcolor}[2]{%
	\expandafter\renewcommand\csname #1\endcsname[1]{%
		\ifblank{##1}{%
			{\color{#2} \textbf{#2}}%
		}{%
			\ifmmode
				\textcolor{#2}{\bm{##1}}%
			\else%
				{\color{#2} \textbf{##1}}%
			\fi	
		}%
	}
}
\providecommand{\first}{}
\providecommand{\second}{}
\DeclareDocumentCommand{\includeCroppedPdf}{ o O{./Figures/} m }{
	\IfFileExists{#2#3-crop.pdf}{}{%
		\immediate\write18{pdfcrop #2#3.pdf #2#3-crop.pdf}}%
	\includegraphics[#1]{#2#3-crop.pdf}
}
\newcommand*{\addFileDependency}[1]{
  \typeout{(#1)}
  \@addtofilelist{#1}
  \IfFileExists{#1}{}{\typeout{No file #1.}}
}
\definecolor{gray90}{gray}{0.9}
\def\colorlist{red,blue,brown,cyan,darkgray,gray,lightgray,green,lime,magenta,olive,orange,pink,purple,teal,violet,white,yellow}
\def\startcomment{[}
	\newcommand{\createcolor}[1]{%
			\expandafter\newcommand\csname #1\endcsname[1]{{\color{#1} ##1}}%
	}
	\newcommand{\msout}[1]{\text{\color{green} \sout{\ensuremath{#1}}}}
	\newcommand{\del}[1]{{\color{green}\ifmmode \msout{#1}\else\sout{#1}\fi}}
	\newcommand{\createcolor}[1]{%
			\expandafter\newcommand\csname #1\endcsname[1]{%
				\noexpandarg%
				\StrChar{##1}{1}[\firstletter]%
				\if\firstletter\startcomment%
					\relax
				\else%
					##1
				\fi
			}%
	}
	\newcommand{\msout}[1]{}
	\newcommand{\del}[1]{}
\def\@tempa#1,{%
    \ifx\relax#1\relax\else
        \createcolor{#1}%
        \expandafter\@tempa
    \fi
}
\newcommand{\hhide}[1]{}
	\def\@testdef #1#2#3{%
		\def\reserved@a{#3}\expandafter \ifx \csname #1@#2\endcsname
			\reserved@a  \else
			\typeout{^^Jlabel #2 changed:^^J%
				\meaning\reserved@a^^J%
				\expandafter\meaning\csname #1@#2\endcsname^^J}%
			\@tempswatrue \fi}
\theoremstyle{plain}
\theoremstyle{definition}
\theoremstyle{remark}
\newacronym[plural=GNNs,firstplural=Graph Neural Networks (GNNs)]{GNN}{GNN}{Graph Neural Network}
\newacronym{FDEs}{FDEs}{Fractional-order Differential Equations}
\newacronym{MSE}{MSE}{Mean Squared Error}
\newacronym{PDEs}{PDEs}{Partial Differential Equations}
\begin{document}

\twocolumn[
\icmltitle{Simple Graph Contrastive Learning via Fractional-order Neural Diffusion Networks}




\icmlsetsymbol{equal}{*}

\begin{icmlauthorlist}
\icmlauthor{Yanan Zhao}{equal,yyy}
\icmlauthor{Ji Feng}{equal,yyy}
\icmlauthor{Kai Zhao}{yyy}
\icmlauthor{Xuhao Li}{sch}
\icmlauthor{Qiyu Kang}{yyy}
\icmlauthor{Wenfei Liang}{yyy}
\icmlauthor{Yahya Alkhatib}{yyy}
\icmlauthor{Xingchao Jian}{yyy}
\icmlauthor{Wee Peng Tay}{yyy}
\\
\textsuperscript{1}{Nanyang Technological University, Singapore}~~
\textsuperscript{2}{Anhui University, China}
\end{icmlauthorlist}

\icmlaffiliation{yyy}{Nanyang Technological University, Singapore}
\icmlaffiliation{sch}{Anhui University, China}

\icmlcorrespondingauthor{Wee Peng Tay}{wptay@ntu.edu.sg}

\icmlkeywords{Machine Learning, ICML}

\vskip 0.3in
]




\begin{abstract}
Graph Contrastive Learning (GCL) has recently made progress as an unsupervised graph representation learning paradigm. GCL approaches can be categorized into augmentation-based and augmentation-free methods. The former relies on complex data augmentations, while the latter depends on encoders that can generate distinct views of the same input. Both approaches may require negative samples for training. In this paper, we introduce a novel augmentation-free GCL framework based on \emph{graph neural diffusion models}. Specifically, we utilize learnable  encoders governed by Fractional Differential Equations (FDE). Each FDE is characterized by an order parameter of the differential operator. We demonstrate that varying these parameters allows us to produce learnable encoders that generate diverse views, capturing either local or global information, for contrastive learning. Our model does not require negative samples for training and is applicable to both homophilic and heterophilic datasets. We demonstrate its effectiveness across various datasets, achieving state-of-the-art performance.
\end{abstract}

\section{Introduction} \label{sec:int}

Contrastive learning is a powerful unsupervised learning technique that has gained significant attention in representation learning. It focuses on learning meaningful representations by distinguishing between similar and dissimilar feature embeddings generated from different \emph{encoders}. The learning process pulls similar instances closer together while pushing dissimilar ones apart in the feature space. This approach enables models to capture important patterns without requiring extensive labeled data. Contrastive learning has been widely applied in areas such as computer vision, natural language processing, and recommender systems. 
When this technique is applied to unsupervised learning involving graph-structured data, it is known as \emph{Graph Contrastive Learning (GCL)}.

GCL methods can be categorized into \emph{augmentation-based} and \emph{augmentation-free} approaches (see \cref{sec:rw}). Augmentation-based methods rely on complex data augmentations, while augmentation-free methods depend on encoders that can generate distinct views of the same input. Both approaches may require negative samples for training. We focus on the augmentation-free approach due to its simplicity and independence from the quality of augmentations. However, the success of an augmentation-free approach hinges on two factors: (a) the ability of the encoders to generate high-quality feature embeddings, and (b) the capability of contrasting encoders to produce distinct views of the same input. To address these requirements, we propose a novel GCL framework that utilizes neural diffusion-based encoders to generate contrasting views of node features.

To explain our main insight regarding applying neural diffusion in GCL, recall that recent works \citet{chamberlain2021grand,thorpe2022grand++,chamberlain2021blend,SonKanWan:C22,rusch2022graph,KanZhaSon:C23} have introduced graph diffusion based on ordinary differential equations (ODE). This approach is analogous to heat diffusion over a graph and can be viewed as a continuous substitution for the message passing of node features in models such as GCN. Diffusions based on \emph{fractional-order differential equations (FDE)} with the differential operator \(\frac{\ud^\alpha}{\ud t^\alpha}\), which generalize ODE-based diffusions, have been proposed in \citet{KanZhaDin:C24frond, ZhaKanJi:C24}. The key parameter is the order $\alpha \in (0,1]$ of the derivative of features \gls{wrt} time $t$. For instance, $\alpha=1$ corresponds to the usual derivative $\ddfrac{}{t}$. FDE allows $\alpha$ to be a real number, e.g., $0<\alpha<1$, with the interpretation that it governs how much ``non-local'' information (from the past feature evolution history) is incorporated in the diffusion. Therefore, we can adjust $\alpha$ to control whether the features contain more global or local information. By choosing different $\alpha$ values for different FDE-based encoders, we can generate features with distinct views, which is essential for augmentation-free GCL.

The term \emph{diffusion} has been associated with different meanings depending on the context. In our paper, it specifically refers to the dynamic of the features following a prescribed differential equation. Our main contributions are as follows:
\begin{itemize}
\item We introduce a novel augmentation-free GCL model utilizing neural diffusion-based encoders on graphs. This model is both simple and flexible, with the feature view properties primarily governed by a single parameter within the continuous domain $(0,1]$.
\item We provide a theoretical analysis of the model using the framework of FDE-based graph neural diffusion models, offering insights into the features generated by contrasting encoders. Based on the observations, we propose a new way to regularize contrastive loss, avoiding the use of negative samples. 

\item We conduct extensive numerical experiments on datasets with varying characteristics (e.g., homophilic vs.\ heterophilic) and demonstrate the model's superior performance compared to existing benchmarks.  
\end{itemize}

\section{Related Works} \label{sec:rw}

In this section, we summarize recent advancements in GCL that are related to our work. Additional details on these methods can be found in \cref{related_work}.

\emph{Augmentation-based} GCL models typically use various data augmentations, such as edge removal and node feature masking, to create diverse graph views. These models optimize the loss by maximizing mutual information between the augmented views \citep{velickovic2019dgi,HassaniICML2020,YifeiCGKS2023,zhu2020GRACE,zhu2021GCA,ChenASP2023,You2020GraphCL,Teng2022DSSL,Chen2022HGRL}. This approach enhances model robustness and generalization.

In contrast, \emph{augmentation-free} methods do not rely on complex augmentations. They directly input the same graph and features into different encoders to generate contrasting views \citep{peng2020gmi,zhang2022LGCL}. Notably, methods such as GraphACL \citep{Teng2023GraphACL} and SP-GCL \citep{Wang2023SPGCL} strictly avoid augmentations but still use negative samples. Similarly, PolyGCL \citep{chen2024polygcl} applies feature shuffling for negative samples. These approaches move away from the traditional homophily assumption, enabling more effective learning in heterophilic and structurally diverse graphs by incorporating advanced encoding and spectral filtering techniques.

A notable trend in GCL models is the elimination of negative sample pairs. For instance, BGRL \citep{thakoor2022BGRL} and CCA-SSG \citep{Zhang2021CCA_SSG} minimize the need for augmentations and completely remove negative sampling, focusing on maximizing correlations between graph views. AFGRL \citep{Lee2022AFGRL} takes this further by eliminating both augmentations and negative samples, directly using the original graph to create positive samples. However, its applicability is primarily limited to homophilic datasets, posing challenges in handling heterophilic or structurally diverse graphs.

Our approach \emph{neither requires any data augmentation nor negative sampling}, while effectively handling \emph{both} homophilic and heterophilic datasets. Therefore, we regard our model with these characterizations as a ``simple'' form of GCL in the title. As highlighted in \cref{sec:int}, unlike previous works, we use encoders based on \emph{fractional-order} neural diffusion to generate pairs of feature representations for different views. As far as we are aware of, this is the first work along this line. 

\section{Preliminaries}
\paragraph{Setup and problem formulation} Consider an undirected graph $\calG=(\calV,\calE)$, where $\calV=\{v_1,\ldots, v_N\}$ is a finite set of $N$ nodes and $\calE \subset \calV \times \calV$ is the set of edges. The raw features of the nodes are represented by the matrix $\bX$, with the $i$-th \emph{row} corresponding to the feature vector $\bx_i$ of node $v_i$. The weighted symmetric adjacency matrix $\bA = (a_{ij})$ has size $N \times N$, where $a_{ij}$ denotes the edge weight between nodes $v_i$ and $v_j$. We denote the complete graph information by $\calX = (\bA, \bX)$.

In unsupervised feature learning, we aim to learn an encoder $f_{\theta}$ that maps the raw feature $\bx_{i}$ of each node $v_i$ to a refined feature representation $\bz_{i}=f_{\theta}(\calX,v_i)$ in $\bbR^F$. The resulting encoded node features $\bZ \in \bbR^{N \times F}$, where each row corresponds to $\bz_i$, are then utilized for downstream tasks such as node classification, which is the primary focus of our paper. In GCL, self-supervision is achieved by encouraging consistency between features $\bZ_1$ and $\bZ_2$ generated from distinct encoders $f_{\theta_1}$ and $f_{\theta_2}$.

\paragraph{Graph neural diffusion models} Traditional \gls{GNN} models, such as GCN and GAT \citep{kipf2017semi, velickovic2018graph, hamilton2017inductive}, rely on (discrete) graph message passing for feature aggregation. In the $k$-th iteration, the message passing step is represented as $\overline{\bA}\bZ^{(k-1)}$, where $\overline{\bA}$ is the normalized adjacency matrix of $\bA$, and $\bZ^{(k-1)}$ is the output from the $(k-1)$-th iteration.   

In contrast, \citet{chamberlain2021grand} has introduced a continuous analog to message passing, akin to heat diffusion in physics. The evolution of the learned feature $\bZ(t)$ is governed by an ODE:
\begin{align}
\frac{\ud}{\ud t} \bZ(t) = \calF(\bW,\bZ(t)), \label{eq.ode}
\end{align}
with the initial condition $\bZ(0)$ being either $\bX$ or its transformed version. Here, $t$ represents the time parameter analogous to the layer index in GCN, while $\calF(\bA,\bZ(t))$ denotes the spatial diffusion. A typical choice for $\calF(\bW,\bZ(t))$ is $-(\bI-\overline{\bA})\bZ(t)$, where $\bI$ is the identity matrix.

This formulation is extended in \citet{KanZhaDin:C24frond} by incorporating fractional-order derivatives. Specifically, for each \emph{order parameter} $\alpha \in (0,1)$, there exists a \emph{fractional-order differential operator} $D^{\alpha}_t$ (see \cref{appendix.fde_models}). This operator generalizes $\ddfrac{}{t}$ (in \cref{eq.ode}) such that as $\alpha \to 1$, $D^{\alpha}_t$ converges to $\ddfrac{}{t}$. Therefore, it is reasonable to set $D^1_t = \ddfrac{}{t}$, allowing $\alpha$ to be chosen from the interval $(0,1]$.

Mimicking \cref{eq.ode}, once we fix $\alpha$, we obtain a new dynamic of the features $\bZ(t)$ following an FDE as:  
\begin{align}
D^\alpha_{t} \bZ(t) = \calF(\bW,\bZ(t)). \label{eq.frond_main}
\end{align}
The exact definition of $D^{\alpha}_t$ varies slightly in different contexts. However, they all share the common trait that $D^{\alpha}_t$ for $\alpha \in (0,1)$ is defined by an integral. Intuitively, this implies that $\bZ(t)$ depends on $\bZ(t')$ for any $t'<t$. Therefore, unlike a solution to \cref{eq.ode}, the dynamic of $\bZ(t)$ has ``memory''. 

In our approach detailed in \cref{CL_framework}, by varying $\alpha$, we create a variety of distinct encoders, which are selected for augmentation-free GCL. In \cref{appendix.ode_models,appendix.fde_models}, we provide exact definitions of $D^{\alpha}_t$ and discuss different variants of ODE- and FDE-based GNN models.

For a final remark, unlike the FLODE model \citep{maskey2023fractional}, which integrates fractional graph shift operators into graph neural ODEs to account for spatial domain rewiring and space-based long-range interactions during feature updates, the FDE model employs a time-fractional derivative to update graph node features, enabling a memory-inclusive process with time-based long-range interactions.

\section{The Proposed Model: FD-GCL}\label{CL_framework}
In this section, we present our proposed \textbf{F}DE \textbf{D}iffusion-based \textbf{GCL} model, abbreviated as FD-GCL. We discuss the theoretical foundation of our approach in \cref{sec:dwd}, providing insights on how different choices of the order $\alpha$ affect the view an encoder generates. Further analysis and numerical evidence are provided in \cref{sec:ce}. Full details of the FD-GCL model are presented in \cref{sec:tfm}.    

\subsection{Encoders with Different Order Parameters} \label{sec:dwd}

\paragraph{FDE encoders} In theory, the evolution of node features $\bZ(t)$ is governed by \cref{eq.frond_main}. For practical implementations, we adopt a \emph{skip-connection mechanism} as described in \citet{chamberlain2021grand, KanZhaDin:C24frond}. This involves periodically adding the initial features to the embeddings after a fixed diffusion time $\tau$. To highlight the influence of the order parameter $\alpha$, we denote the resulting time-varying features, incorporating skip-connections, as $\bZ_{\alpha}(t)$.

If we use two encoders by solving FDEs with different orders $\alpha_1$ and $\alpha_2$, the resulting features are $\bZ_{\alpha_1}(t)$ and $\bZ_{\alpha_2}(t)$. As mentioned in \cref{sec:int}, the effectiveness of GCL relies on whether $\bZ_{\alpha_1}(t)$ and $\bZ_{\alpha_2}(t)$ are high-quality feature representations with distinct views. Therefore, it is essential to analyze how the properties of $\bZ_{\alpha}(t)$ vary with different choices of $\alpha$.

\paragraph{Graph signal processing (GSP)} To present our main result, we need to introduce some concepts from graph signal processing (GSP) \citep{Shu13}. We provide a concise summary below. Consider the \emph{normalized Laplacian} $\overline{\bL} = \bI - \overline{\bA}$ (as used in the definition of $\calF$ in \cref{eq.ode}). Since $\overline{\bL}$ is symmetric, it can be decomposed orthogonally as $\overline{\bL} = \bU \bLambda \bU\T$. In this decomposition, the diagonal entries $\lambda_1 \leq \ldots \leq \lambda_N$ of $\bLambda$ are the ordered eigenvalues of $\overline{\bL}$ (also called the \emph{graph frequencies}), and the $i$-th column $\bu_i$ of $\bU$ is the eigenvector corresponding to the eigenvalue $\lambda_i$. Each $\bu_i$ represents a signal on the graph $\calG$.

For small $\lambda_i$, $\bu_i$ is smooth, meaning that signal values are similar across edges. Conversely, for large $\lambda_i$, $\bu_i$ can be spiky, highlighting local features. Each signal $\bx$ (e.g., a column of the feature matrix $\bX$) can be expressed as a \emph{spectral decomposition}:
\begin{align*}
\bx = \sum_{i=1}^{N} c_i\bu_i, \text{ where } c_i = \langle\bx,\bu_i\rangle.
\end{align*}
In GSP, each $c_i$ is referred to as a \emph{Fourier coefficient}, which measures the frequency response of $\bx$ to the basis vector $\bu_i$. For convenience, we say $\bx$ has \emph{large smooth components} if $|c_i|$ is relatively large for small indices $i$, and it is \emph{energy concentrated} if $|c_i|$ is small for most indices $i$.

\paragraph{The main result} We are now ready to state the main result using the concepts introduced above. A rigorous statement involves additional concepts and assumptions. To keep the discussion focused and avoid introducing unnecessary terms, we provide a detailed and rigorous explanation in \cref{sec:td}. We also explain the domain of $\alpha$ in \cref{appendix.fde_models}.

\begin{Theorem}[Informal] \label{thm}
For $0 < \alpha_1 < \alpha_2 \leq 1$, the following hold for features $\bZ_{\alpha_1}(t)$ and $\bZ_{\alpha_2}(t)$ when $t$ is large:
\begin{enumerate}[(a)]
    \item \label{it:bhl} $\bZ_{\alpha_2}(t)$ contains more large smooth components as compared with $\bZ_{\alpha_1}(t)$.
    \item \label{it:bim} $\bZ_{\alpha_1}(t)$ is less energy concentrated as compared with $\bZ_{\alpha_2}(t)$.
\end{enumerate}
Moreover, the contrast in (a) and (b) becomes more pronounced as the difference $\alpha_2 - \alpha_1$ increases.
\end{Theorem}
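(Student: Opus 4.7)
The plan is to exploit the closed-form spectral solution of the linear FDE via the Mittag--Leffler function and then compare Fourier coefficients direction by direction for the two orders. Taking the canonical choice $\calF(\bW,\bZ)=-\overline{\bL}\bZ$ in \eqref{eq.frond_main} with $\bZ(0)=\bX$, the solution is $\bZ_{\alpha}(t)=E_{\alpha}(-\overline{\bL} t^{\alpha})\bX$, where $E_{\alpha}$ is the Mittag--Leffler function. Using the spectral decomposition $\overline{\bL}=\bU\bLambda\bU\T$, each Fourier coefficient of (each column of) $\bZ_{\alpha}(t)$ becomes $c_i^{(\alpha)}(t)=E_{\alpha}(-\lambda_i t^{\alpha})\,c_i(0)$, reducing the multivariate problem to a scalar analysis on each eigendirection.

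Next, I would invoke the standard asymptotics $E_{\alpha}(-z)\sim (z\,\Gamma(1-\alpha))^{-1}$ as $z\to\infty$ for $\alpha\in(0,1)$ and $E_1(-z)=e^{-z}$, together with $E_{\alpha}(0)=1$. Two observations then drive the argument: (i) for small $\lambda_i$, both $E_{\alpha_1}$ and $E_{\alpha_2}$ values stay close to $1$, so the smooth Fourier coefficients are approximately preserved in both $\bZ_{\alpha_1}(t)$ and $\bZ_{\alpha_2}(t)$; (ii) for large $\lambda_i$ and $t>1$, since $t^{\alpha_2}>t^{\alpha_1}$, the larger order induces a stronger suppression, i.e.\ $|E_{\alpha_2}(-\lambda_i t^{\alpha_2})|<|E_{\alpha_1}(-\lambda_i t^{\alpha_1})|$ (with the gap becoming exponential-versus-polynomial when $\alpha_2=1$). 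Consequently $\bZ_{\alpha_2}(t)$ is dominated by the low-frequency (smooth) modes, giving (a), while $\bZ_{\alpha_1}(t)$ retains non-trivial mass across many more frequencies, giving (b). The gap $\alpha_2-\alpha_1$ controls the multiplicative contrast $t^{\alpha_2-\alpha_1}$ in the high-frequency multipliers, so enlarging it sharpens both effects. To turn the informal notions into quantitative statements, I would pick a cut-off $k$ and formalize (a) by the tail ratio $\sum_{i\le k}|c_i^{(\alpha)}(t)|^2/\|\bZ_\alpha(t)\|_F^2$, and (b) by a dispersion measure such as the participation ratio $(\sum_i|c_i^{(\alpha)}|^2)^2/\sum_i|c_i^{(\alpha)}|^4$; monotonicity in $\alpha$ then follows directly from the scalar inequality above.

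The main obstacle is the skip-connection mechanism, which re-injects $\bX$ every $\tau$ units of time and therefore breaks the pure Mittag--Leffler decay of $\bZ_\alpha(t)$. I would handle this by linearity: on each interval $[n\tau,(n+1)\tau)$ the dynamics is again of Mittag--Leffler type acting on a signal whose spectral content inherits the ordering from $\bX$, so the eigen-direction comparison applies segment by segment and then aggregates. A secondary subtlety is relaxing the linear form of $\calF$; for a non-linear diffusion one would linearize around the dominant smooth subspace and treat the residual perturbatively. I expect the qualitative ordering in (a) and (b) to survive, but pinning down explicit constants at that level of generality is the technically delicate part.
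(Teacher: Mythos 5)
Your proposal is correct and follows essentially the same route as the paper: diagonalize $\overline{\bL}$, reduce to scalar fractional dynamics solved by Mittag--Leffler functions, compare frequencies via the large-argument asymptotics with coefficients $1/\big(\lambda_i^j\Gamma(1-j\alpha)\big)$ (using monotonicity of $\Gamma$ on $(0,1]$ and of $\tau^{-j\alpha}$ in $\alpha$), and absorb the skip connections through the resulting finite sum of Mittag--Leffler factors. The only differences are cosmetic: the paper keeps the full expansion up to order $n_l$ with $n_l\alpha_l<1\le(n_l+1)\alpha_l$ (so the smaller order retains additional positive terms $\tau^{-j\alpha_1}$ for $n_2<j\le n_1$) rather than only the leading term, and it restricts the formal statement to $\alpha_2<1$ instead of treating the exponential case $\alpha_2=1$ separately.
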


In the next subsection, we discuss the implications of \cref{thm} for our GCL model and provide insights into its strong performance as demonstrated in \cref{sec:exp}. We also present numerical evidence to support our claims whenever possible.

\subsection{Contrasting Encoders} \label{sec:ce}

Following the setup in \cref{sec:dwd}, we select $0 < \alpha_1 < \alpha_2 \leq 1$ 
as order parameters for FDE-based encoders, and obtain continuous sequences of features $\bZ_{\alpha_1}(t)$ and $\bZ_{\alpha_2}(t)$.

\begin{figure}[!tb]
\centering
\includegraphics[width=0.85\linewidth]{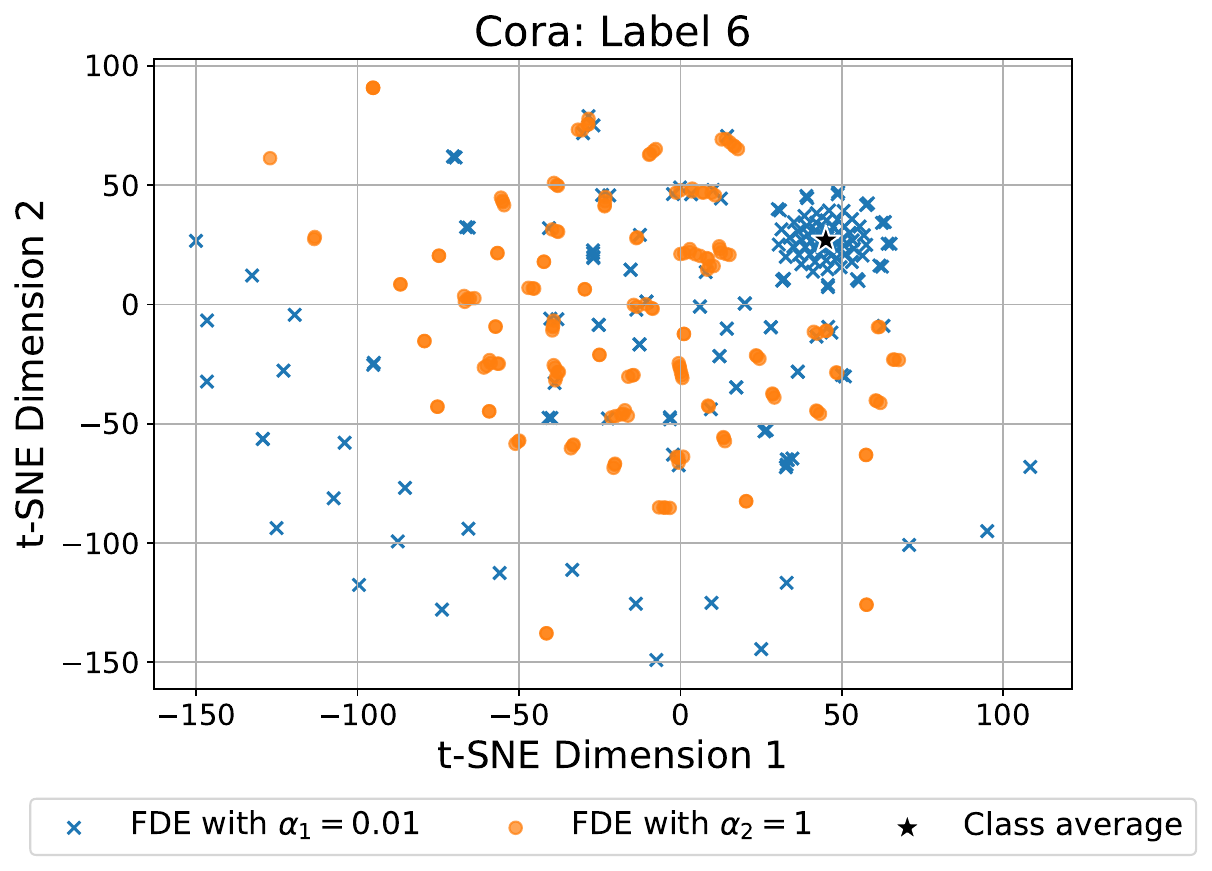}
\includegraphics[width=0.85\linewidth]{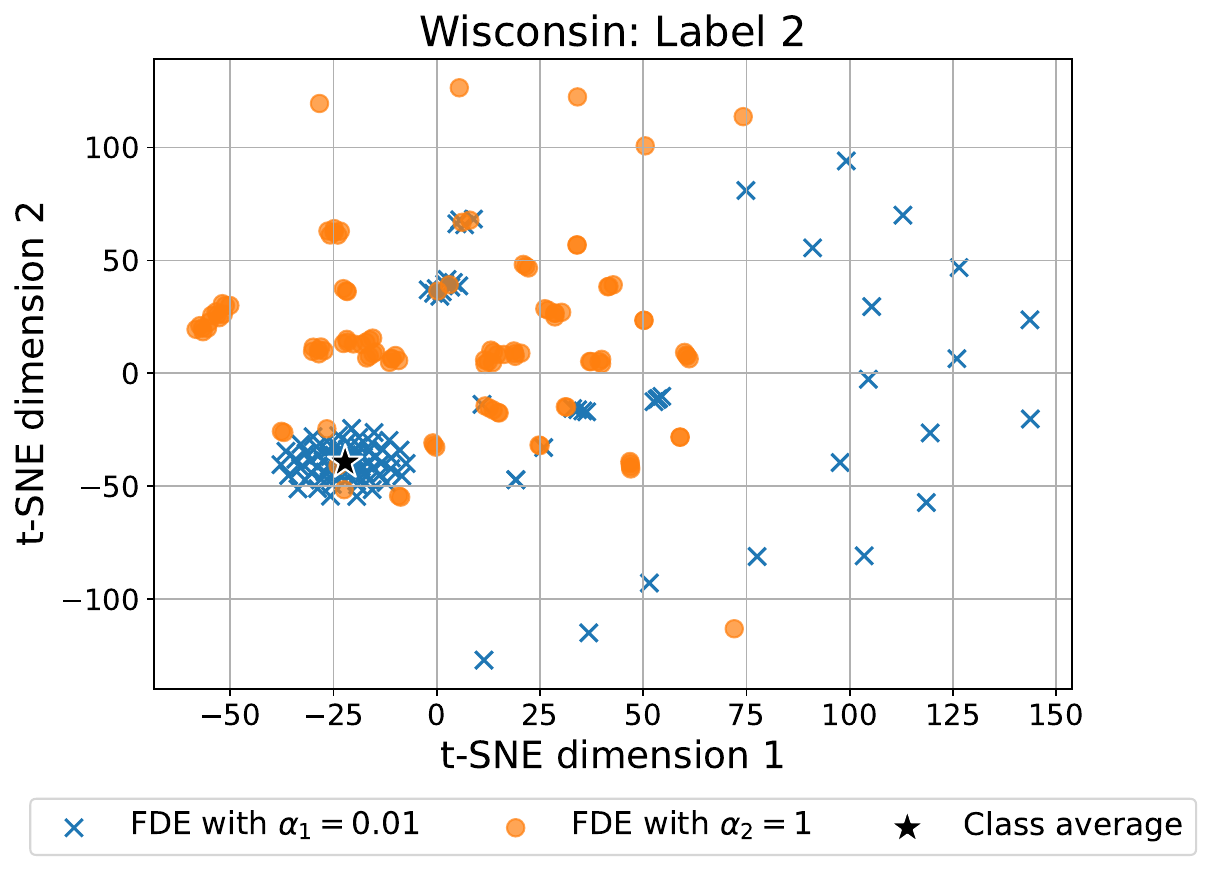}
\caption{The t-SNE visualizations of node features from a single class, generated by encoders with different FDE order parameters. For comparison, features are linearly translated to align class averages. The datasets used are Cora (homophilic) and Wisconsin (heterophilic). The visualizations demonstrate that the two encoders produce embeddings with distinct characteristics. A smaller $\alpha$ produces features with a concentrated core, while features generated by a larger $\alpha$ are more evenly spaced. Additional results for other label classes are provided in \cref{sec:mnr}.} \label{fig:wst}
\end{figure}

\begin{figure}[!htb]
\centering
\includegraphics[width=\linewidth]{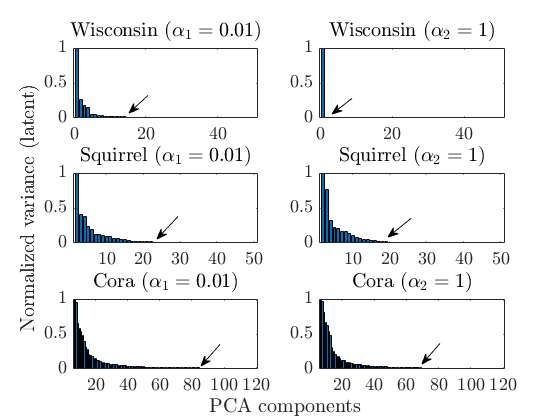}
\caption{The PCA components of features for different datasets and choices of FDE order parameters. We see that for the small order ($\alpha_1$), the bar chart is comparatively more spread out, which prevents dimension collapse.} \label{fig:wsp}
\end{figure}

\paragraph{Distinct views} As mentioned in \cref{sec:int}, the success of an augmentation-free GCL hinges on whether different encoders generate feature representations with distinct views. We assert that $\bZ_{\alpha_2}(t)$ encapsulates more global summary information, while $\bZ_{\alpha_1}(t)$ captures finer local details.

More specifically, by \cref{thm}\ref{it:bhl}, the smooth components (corresponding to small graph frequencies $\lambda_i$) play a predominant role in $\bZ_{\alpha_2}(t)$. Each such component is spanned by a vector $\bu_i$, which is a signal with small variation across edges. Therefore, the global structural information is encoded in such a component. On the other hand, for $\bZ_{\alpha_1}(t)$, 
non-smooth components $\bu_i$, corresponding to large graph frequencies $\lambda_i$, have larger coefficients and hence contribution (as compared with $\bZ_{\alpha_2}(t)$). As such a $\bu_i$ is local in nature (spiky), $\bZ_{\alpha_1}(t)$ captures local details as claimed. In summary, we have the correspondence: $\alpha_1 \longleftrightarrow$ ``local'', and $\alpha_2 \longleftrightarrow$ ``global''.

In \cref{fig:wst}, we numerically verify that the encoders generate different views by showing that the embeddings of $\bZ_{\alpha_1}(t)$ and $\bZ_{\alpha_2}(t)$ have distinct characterizations, on Cora (homophilic) and Wisconsin (heterophilic). Features generated with $\alpha_1$ form a concentrated core near the average, while there are deviated features. They can be counterbalanced by features generated with $\alpha_2$, which are more evenly spaced. 

\paragraph{Dimension collapse} \emph{Dimension collapse} occurs when features are confined to a low-dimensional subspace within the full embedding space. This issue should be addressed and avoided in contrastive learning design. We assert that if $\alpha_1$ is small, then $\bZ_{\alpha_1}(t)$ effectively avoids dimension collapse. According to \cref{thm}\ref{it:bim}, $\bZ_{\alpha_1}(t)$ is less energy concentrated. This implies that the columns of $\bZ_{\alpha_1}(t)$ can be represented as $\sum_{1\leq i\leq N} c_i\bu_i$ with a relatively bigger number of large $|c_i|$'s. Consequently, the features do not collapse into a low-dimensional space.
As numerical evidence, we present the principal component analysis (PCA) decomposition of $\bZ_{\alpha_1}(t)$ and $\bZ_{\alpha_2}(t)$ for the Cora, Squirrel and Wisconsin datasets in \cref{fig:wsp}. The results verify both \cref{thm}\ref{it:bim} and our claim in this paragraph.

\paragraph{Encoder quality} The model's performance is expected to improve if the encoders can effectively cluster nodes of the same class independently. Our approach benefits from the proven performance of FDE-based encoders in supervised settings \cite{chamberlain2021grand,KanZhaDin:C24frond}. We numerically verify their unsupervised clustering capability as follows: For each label class $c$, we compute the average intra-class feature distance $d_c^{\mathrm{intra}}$ among nodes of class $c$ and the average inter-class feature distance $d_c^{\mathrm{inter}}$ between nodes of class $c$ and nodes of other classes. The ratio $r_c = d_c^{\mathrm{inter}}/d_c^{\mathrm{intra}}$ serves as a measure of feature clustering quality, and the larger the better. The results, shown in \cref{fig:wsc}, indicate that our proposed encoders generate high-quality feature embeddings. 

\begin{figure}[!htb]
    \centering
    \includegraphics[width=0.9\linewidth]{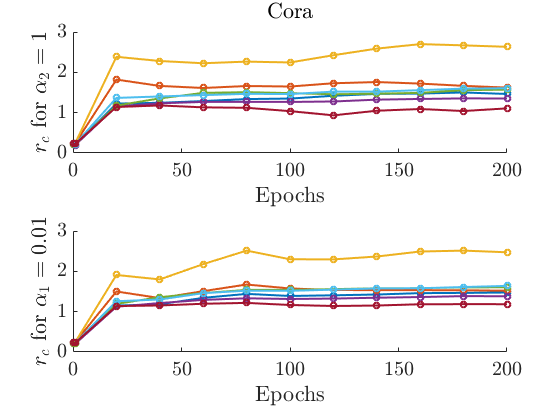}
    \includegraphics[width=0.9\linewidth]{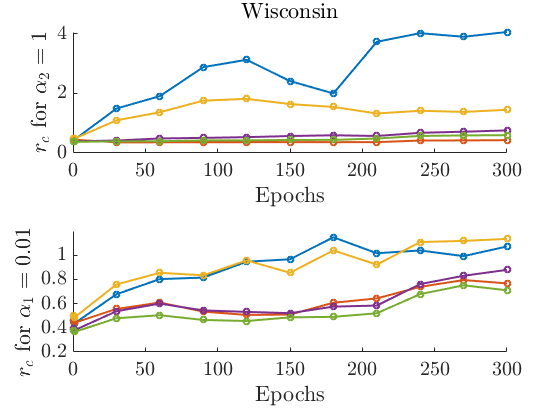}
    \caption{The variation in the ratio $r_c$ during training. Each curve represents a label class. The ratio $r_c$ for the input features is shown at epoch $0$. We see that the ratio generally increases at the beginning of the training and stabilizes. This suggests that the encoders indeed have good clustering capabilities (as compared with the input). }
    \label{fig:wsc}
\end{figure}

\subsection{The FD-GCL Model} \label{sec:tfm}

We are now in a position to provide the full details of the proposed FD-GCL model. Recall that we have a graph $\calG$ (with adjacency matrix $\bA$) and initial node features $\bX$, collectively denoted by $\calX = (\bA, \bX)$. Our goal is to learn two FDE encoders, $f_{\theta_1}$ and $f_{\theta_2}$, in an unsupervised manner. We want to have the following characterization of the views they generate: $f_{\theta_1} \longleftrightarrow$ ``local'', $f_{\theta_2} \longleftrightarrow$ ``global''.

\begin{figure*}[h!]   \centering
    \begin{tikzpicture}[
        scale = 0.8,
        node distance=1.8cm and 2.4cm,
        roundnode/.style={circle, draw=black!60, fill=white!5, thick, minimum size=4mm, inner sep=0.5mm, text centered},
        lightrect/.style={rectangle, dashed, draw=black!60, fill=blue!10, thick, text centered, minimum height=2.5cm, minimum width=3cm},
        midrect/.style={rectangle, dashed, draw=black!60, fill=blue!20, thick, text centered, minimum height=2.5cm, minimum width=3cm},
        darkrect/.style={rectangle, dashed, draw=black!60, fill=blue!30, thick, text centered, minimum height=2.5cm, minimum width=3cm},
        modelrect/.style ={rectangle, draw=black!60, fill=purple!30, thick, text centered, minimum height=2.5cm, minimum width=3cm},
        actrect/.style ={rectangle, draw=black!60, fill=yellow!30, thick, text centered, minimum height=2.5cm, minimum width=3cm},
        arrow/.style={-{Latex[round]}, thick},
        dashedarrow/.style={-{Latex[round]}, thick, dashed},
        doublearrow/.style={<->, thick}
    ]

    \node[rectangle, midrect, minimum width=2.7cm, minimum height=2.1cm] (rect) at (0,0) {};

    \node[roundnode] (xi) at (0, 0) {\small{$\bx_i$}};

    \node[roundnode] (n1) at (-1.3+0.5, 0.8-0.3) {};
    \node[roundnode] (n2) at (1.5-0.5, 1-0.3) {};
    \node[roundnode] (n3) at (-1.6+0.5, -0.5+0.3) {};
    \node[roundnode] (n4) at (1.7-0.5, -1+0.3) {};
    \node[roundnode] (n5) at (-0.5+0.3, -1.2+0.4) {};

    \draw[thick] (xi) -- (n1);
    \draw[thick] (xi) -- (n2);
    \draw[thick] (xi) -- (n3);
    \draw[thick] (xi) -- (n4);
    \draw[thick] (xi) -- (n5);

    \draw[thick] (n1) -- (n3);
    \draw[thick] (n2) -- (n4);
    \node[above=0.2cm of rect.north] {\small $\calG:\calX = (\bA, \bX)$};

    \node[rectangle, lightrect, minimum width=2.7cm, minimum height=2.1cm] (rect1) at (5,2) {};

    \node[roundnode] (xi) at (5, 2) {$\bz_i$};

    \node[roundnode] (n1) at (-1.3+5.5, 0.8-0.3+2) {};
    \node[roundnode] (n2) at (1.5-0.5+5, 1-0.3+2) {};
    \node[roundnode] (n3) at (-1.6+5.5, -0.5+0.3+2) {};
    \node[roundnode] (n4) at (1.7-0.5+5, -1+0.3+2) {};
    \node[roundnode] (n5) at (-0.5+0.3+5, -1.2+0.4+2) {};

    \draw[thick] (xi) -- (n1);
    \draw[thick] (xi) -- (n2);
    \draw[thick] (xi) -- (n3);
    \draw[thick] (xi) -- (n4);
    \draw[thick] (xi) -- (n5);

    \draw[thick] (n1) -- (n3);
    \draw[thick] (n2) -- (n4);

    \node[above=0.2cm of rect1.north] {\small $\calG_{1}:\calX = (\bA,\bZ_{\alpha_1}(0))$};

    \node[rectangle, darkrect, minimum width=2.7cm, minimum height=2.1cm] (rect2) at (5,-2) {};

    \node[roundnode] (xi) at (5, -2) {$\bz_i$};

    \node[roundnode] (n1) at (-1.3+0.5+5, 0.8-0.3-2) {};
    \node[roundnode] (n2) at (1.5-0.5+5, 1-0.3-2) {};
    \node[roundnode] (n3) at (-1.6+0.5+5, -0.5+0.3-2) {};
    \node[roundnode] (n4) at (1.7-0.5+5, -1+0.3-2) {};
    \node[roundnode] (n5) at (-0.5+0.3+5, -1.2+0.4-2) {};

    \draw[thick] (xi) -- (n1);
    \draw[thick] (xi) -- (n2);
    \draw[thick] (xi) -- (n3);
    \draw[thick] (xi) -- (n4);
    \draw[thick] (xi) -- (n5);

    \draw[thick] (n1) -- (n3);
    \draw[thick] (n2) -- (n4);

    \node[below=0.2cm of rect2.south] {\small $\calG_{2}:\calX = (\bA,\bZ_{\alpha_2}(0))$};

    \draw[arrow] ($(rect.east)+(0.2,0.1)$) -- ($(rect1.west)-(0.2,0)$) node[pos=0.3, above=10pt] {$\bW_{1}$};
    \draw[arrow] ($(rect.east)+(0.2,-0.1)$) -- ($(rect2.west)-(0.2,0)$) node[pos=0.4, below=10pt] {$\bW_{2}$};

    \node[rectangle, modelrect, minimum width=2.2cm, minimum height=1.5cm] (rect3) at (8.5, 2) {FDE with $\alpha_{1}$};
    \node[rectangle, modelrect, minimum width=2.2cm, minimum height=1.5cm] (rect4) at (8.5, -2) {FDE with $\alpha_{2}$};
    
    \node[rectangle, lightrect, minimum width=2.7cm, minimum height=2.1cm] (rect5) at (12,2) {};
    \node[above=0.2cm of rect5.north] {\small $\calG_{1}:\calX = (\bA,\bZ_{\alpha_1}(T))$};
    \node[roundnode] (xi) at (12, 2) {$\bz_i$};

    \node[roundnode] (n1) at (-1.3+5.5+7, 0.8-0.3+2) {};
    \node[roundnode] (n2) at (1.5-0.5+5+7, 1-0.3+2) {};
    \node[roundnode] (n3) at (-1.6+5.5+7, -0.5+0.3+2) {};
    \node[roundnode] (n4) at (1.7-0.5+5+7, -1+0.3+2) {};
    \node[roundnode] (n5) at (-0.5+0.3+5+7, -1.2+0.4+2) {};

    \draw[thick] (xi) -- (n1);
    \draw[thick] (xi) -- (n2);
    \draw[thick] (xi) -- (n3);
    \draw[thick] (xi) -- (n4);
    \draw[thick] (xi) -- (n5);

    \draw[thick] (n1) -- (n3);
    \draw[thick] (n2) -- (n4);
    
    \node[rectangle, darkrect, minimum width=2.7cm, minimum height=2.1cm] (rect6) at (12,-2) {};
    \node[below=0.2cm of rect6.south] {\small $\calG_{2}:\calX = (\bA,\bZ_{\alpha_2}(T))$};
    \node[roundnode] (xi) at (12, -2) {$\bz_i$};

    \node[roundnode] (n1) at (-1.3+5.5+7, 0.8-0.3-2) {};
    \node[roundnode] (n2) at (1.5-0.5+5+7, 1-0.3-2) {};
    \node[roundnode] (n3) at (-1.6+5.5+7, -0.5+0.3-2) {};
    \node[roundnode] (n4) at (1.7-0.5+5+7, -1+0.3-2) {};
    \node[roundnode] (n5) at (-0.5+0.3+5+7, -1.2+0.4-2) {};

    \draw[thick] (xi) -- (n1);
    \draw[thick] (xi) -- (n2);
    \draw[thick] (xi) -- (n3);
    \draw[thick] (xi) -- (n4);
    \draw[thick] (xi) -- (n5);

    \draw[thick] (n1) -- (n3);
    \draw[thick] (n2) -- (n4);
    

    \node[rectangle, actrect, minimum width=0.4cm, minimum height=2.1cm] (rect7) at (14.4, 2) {$\sigma$};
    \node[rectangle, actrect, minimum width=0.4cm, minimum height=2.1cm] (rect8) at (14.4, -2) {$\sigma$};

    \draw[arrow] ($(rect1.east)+(0.02,0)$) -- ($(rect3.west)-(0.02,0)$) node[pos=0.3, above=10pt]{};
    \draw[arrow] ($(rect2.east)+(0.02,0)$) -- ($(rect4.west)-(0.02,0)$) node[pos=0.3, above=10pt]{};

    \draw[arrow] ($(rect3.east)+(0.02,0)$) -- ($(rect5.west)-(0.02,0)$) node[pos=0.3, above=10pt]{};
    \draw[arrow] ($(rect4.east)+(0.02,0)$) -- ($(rect6.west)-(0.02,0)$) node[pos=0.3, above=10pt]{};

    \draw[arrow] ($(rect5.east)+(0.02,0)$) -- ($(rect7.west)-(0.02,0)$) node[pos=0.3, above=10pt]{};
    \draw[arrow] ($(rect6.east)+(0.02,0)$) -- ($(rect8.west)-(0.02,0)$) node[pos=0.3, above=10pt]{};

    \draw[doublearrow] ($(rect7.east)+(0.1,0)$) to[bend left=30] node[pos=0.5, right=2pt] {$\calL(\bZ_1,\bZ_2)$}($(rect8.east)+(0.1,0)$);

    \end{tikzpicture}
    \caption{The overall proposed contrastive learning framework of FD-GCL. We choose $\alpha_1<\alpha_2$ for the encoders.}
    \label{fig:framework}
\end{figure*}
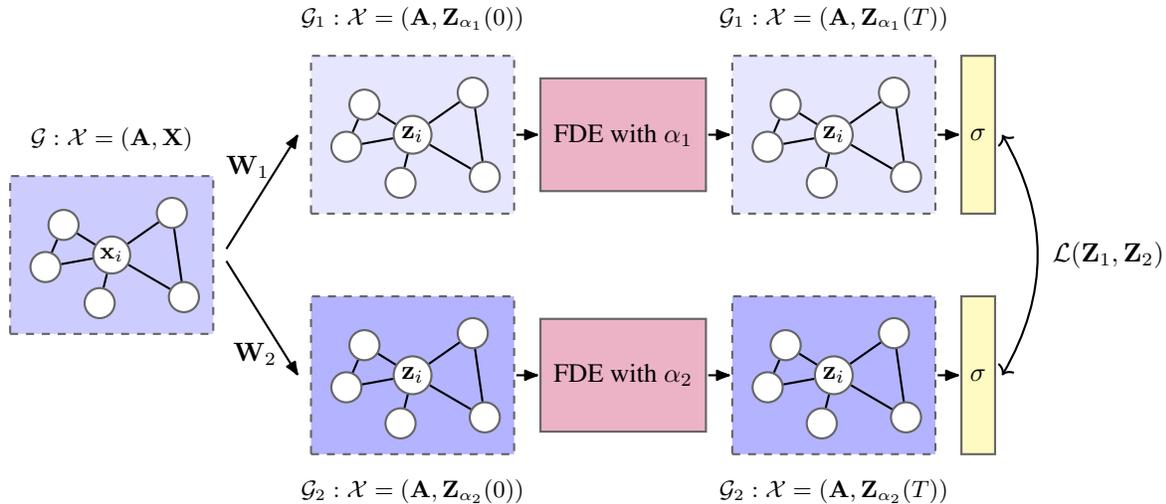

The pipelines for $f_{\theta_1}$ and $f_{\theta_2}$ are identical. For convenience, we use $l$ to denote the encoder index, which can be either $1$ or $2$. The procedure consists of the following steps (see \cref{fig:framework} for an illustration):
\begin{enumerate}[S1]
    \item Apply a \emph{linear encoder} with a learnable matrix $\bW_l$ to $\bX$ to obtain $\bY_l = \bX\bW_l$. This step typically increases the feature dimension.
    \item Choose an FDE order parameter $\alpha_l$ and set $\bZ_{\alpha_l}(0) = \bY_l$. Use the FDE-based encoders with skip connections to obtain $\bZ_{\alpha_l}(t)$ as described in \cref{sec:dwd}.
    \item Stop the diffusion at a chosen time $T$. Apply a non-linear activation function $\sigma$ (e.g., ReLU) to $\bZ_{\alpha_l}(T)$ to obtain the final output $\bZ_l$.
\end{enumerate}
The model parameter $\theta_l$ includes the learnable weight matrix $\bW_l$ and the hyperparameters $\alpha_l$ and $T$. Details on how these hyperparameters are chosen are provided in \cref{supp.hyperparameter}.

To learn the model parameters, we apply a \emph{contrastive loss} $\calL_0$ to the final features $\bZ_1$ and $\bZ_2$. In this paper, we choose $\calL_0$ to be the mean cosine similarity for its simplicity, which we call \emph{cosmean} (see \cref{cosmean_loss} in \cref{supp.loss}).

Based on the discussion in \cref{sec:ce}, we modify $\calL_0$ to avoid issues such as the collapse of the two views to a single representation. By \cref{fig:wsp}, we observe that both $\bZ_{\alpha_1}(T)$ and $\bZ_{\alpha_2}(T)$ have pronounced main components. The unit directional vectors of their respective dominant components are denoted by $\bc_1$ and $\bc_2$. To prevent the aforementioned collapse, it suffices to penalize the angle between $\bc_1$ and $\bc_2$ from being too small. Therefore, our modified loss, named \emph{regularized cosmean}, takes the form:
\begin{align*}
\calL(\bZ_1,\bZ_2) = \calL_0(\bZ_1,\bZ_2) + \eta |\langle \bc_1, \bc_2 \rangle|,
\end{align*}
where $\eta$ is a regularization weight. The added regularization has the effect of driving the two feature representations apart and our approach does not need any \emph{negative samples}. The contribution of the regularization term is further analyzed in \cref{sec:as}.

Finally, for downstream tasks, it suffices to take a weighted average $\beta\bZ_1+(1-\beta)\bZ_2$. We either choose $\beta=0.5$ or tune it based on validation accuracy.

\section{Experiments} \label{sec:exp}

\subsection{Experimental Setup}

\paragraph{Datasets and splits} We conduct experiments on both homophilic and heterophilic datasets. The heterophilic datasets include Wisconsin, Cornell, Texas, Actor, Squirrel, Crocodile, and Chameleon. Additionally, we experiment on two recently proposed large heterophilic datasets: Roman-empire (Roman) and arXiv-year. For homophilic datasets, we use three citation graphs: Cora, Citeseer, and Pubmed, as well as two co-purchase datasets: Computer and Photo. We also include a large-scale homophilic dataset, Ogbn-Arxiv (Arxiv). For all datasets, we use the public and standard splits as used in the cited papers. Detailed descriptions and splits of the datasets are provided in \cref{dataset_statistic}.

\paragraph{Baselines} We compare FD-GCL with several recent state-of-the-art unsupervised learning methods: DGI \citep{velickovic2019dgi}, GMI \citep{peng2020gmi}, MVGRL \citep{hassani2020mvgrl}, GRACE \citep{zhu2020GRACE}, GCA \citep{zhu2021GCA},  CCA-SSG \citep{Zhang2021CCA_SSG}, BGRL \citep{thakoor2022BGRL}, AFGRL \citep{Lee2022AFGRL}, Local (L)-GCL \citep{zhang2022LGCL}, HGRL \citep{Chen2022HGRL}, DSSL \citep{Teng2022DSSL}, SP-GCL \citep{Wang2023SPGCL}, GraphACL \citep{Teng2023GraphACL}, and PolyGCL \citep{chen2024polygcl}. The detailed descriptions and implementations of these baselines are given in \cref{app.baselines}. 

\paragraph{Evaluation protocol} To evaluate the quality of the representation, we focus on the node classification task. Following the standard linear evaluation protocol, we train a linear classifier on the frozen representations and report the test accuracy as the evaluation metric. Results for graph classification are reported in \cref{supp.graphclassification}. 

\paragraph{Setup} For FD-GCL here, we apply the basic GRAND version of $\calF$ in \cref{eq.frond_main}, i.e., $-(\bI-\overline{\bA})\bZ(t)$. Other options for $\calF$ are explored and evaluated in \cref{appendix.ode_models} and \cref{sec:diffe_model}.
 
We initialize model parameters randomly and train the encoder using the Adam optimizer. Each experiment is conducted with ten random seeds, and we report the average performance along with the standard deviation. To ensure a fair comparison, the best hyperparameter configurations for all methods are selected based solely on validation set accuracy. For baselines lacking results on certain datasets or not utilizing standard public data splits \citep{Chen2022HGRL,Teng2022DSSL,chen2024polygcl}, we reproduce their outcomes with the official code of the authors. Additional implementation details and the hyperparameter search space are provided in \cref{supp.hyperparameter}.

\begin{table*}[!htb]
\caption{Node classification results(\%) on homophilic datasets. The best and the second-best result under each dataset are highlighted in \first{} and \second{}, respectively. OOM refers to out of memory on an NVIDIA RTX A5000 GPU (24GB).}
\centering
\fontsize{9pt}{10pt}\selectfont
\setlength{\tabcolsep}{2pt}
 \resizebox{1\textwidth}{!}{
 \begin{tabular}{lccccccc|cccc|c}
\toprule
Method    &  DGI            &GMI              &MVGRL            &GRACE            & GCA              &CCA-SSG         &BGRL             &AFGRL            &SP-GCL             & GraphACL             &PolyGCL              & FD-GCL        \\
\midrule
Cora      & 82.30$\pm$0.60  &82.70$\pm$0.20   &82.90$\pm$0.71   &80.00$\pm$0.41   & 82.93$\pm$0.42   &84.00$\pm$0.40  & 82.70$\pm$0.60  &82.31$\pm$0.42   &83.16$\pm$0.13    & \second{84.20$\pm$0.31}       & 81.97$\pm$0.19      &\first{84.50$\pm$0.43}      \\
Citeseer  & 71.80$\pm$0.70  &73.01$\pm$0.30   &72.61$\pm$0.70   &71.72$\pm$0.62   & 72.19$\pm$0.31   &73.10$\pm$0.30  & 71.10$\pm$0.80  &68.70$\pm$0.30   &71.96$\pm$0.42    & \second{73.63$\pm$0.22}       & 71.97$\pm$0.29      &\first{73.72$\pm$0.30}    \\
Pubmed    & 76.80$\pm$0.60  &80.11$\pm$0.22   &79.41$\pm$0.31   &79.51$\pm$1.10   & 80.79$\pm$0.45   &81.00$\pm$0.40  & 79.60$\pm$0.50  &79.71$\pm$0.21   &79.16$\pm$0.84    & \second{82.02$\pm$0.15}       & 77.48$\pm$0.39      &\first{82.02$\pm$0.28}     \\
Computer  & 83.95$\pm$0.47  &82.21$\pm$0.34   &87.52$\pm$0.11   &86.51$\pm$0.32   & 87.85$\pm$0.31   &88.74$\pm$0.28  & 89.69$\pm$0.37  &\second{89.90$\pm$0.31}   &89.68$\pm$0.19    & 89.80$\pm$0.25       & 86.73$\pm$0.34      &\first{90.13$\pm$0.45}      \\
Photo     & 91.61$\pm$0.22  &90.72$\pm$0.21   &91.72$\pm$0.10   &92.5$\pm$0.22    & 91.70$\pm$0.10   &93.14$\pm$0.14  & 92.90$\pm$0.30  &93.25$\pm$0.33   &92.49$\pm$0.31    &\second{93.31$\pm$0.19} & 91.67$\pm$0.21    &\first{94.17$\pm$0.86}      \\
Arxiv     & 70.32$\pm$0.25    & OOM            & OOM           & OOM           & 69.37$\pm$0.20   &71.21$\pm$0.20  & \second{71.64$\pm$0.24}  & OOM           &68.25$\pm$0.22    & \first{71.72$\pm$0.26}       & OOM                 & 70.46$\pm$0.13      \\
\bottomrule
\end{tabular}
}
\label{tab:noderesults_homo}
\end{table*}

\begin{table*}[!htb]\small
\caption{Node classification results(\%) on heterophilic datasets. The best and the second-best result under each dataset are highlighted in \first{} and \second{}, respectively.} 
\centering
\fontsize{9pt}{10pt}\selectfont
\setlength{\tabcolsep}{2pt}
 \resizebox{1\textwidth}{!}{
 \begin{tabular}{lccccc|ccccc|c}
\toprule
Method      &  DGI              & GCA               &  CCA-SSG             & BGRL          & HGRL               & L-GCL                  & DSSL                 &SP-GCL              & GraphACL             &PolyGCL           & FD-GCL        \\
\midrule
Squirrel    &  26.44$\pm$1.12   & 48.09$\pm$0.21    &  46.76$\pm$0.36    & 36.22$\pm$1.97   & 48.31$\pm$0.65    & 52.94$\pm$0.88         &  40.51$\pm$0.38      & 52.10$\pm$0.67     &\second{54.05$\pm$0.13} & 34.25$\pm$0.66    & \first{64.92$\pm$1.46}     \\
Chameleon   &  60.27$\pm$0.70   & 63.66$\pm$0.32    &  62.41$\pm$0.22    & 64.86$\pm$0.63   & 65.82$\pm$0.61    & 68.74$\pm$0.49         &  66.15$\pm$0.32      & 65.28$\pm$0.53     &\second{69.12$\pm$0.24} & 46.84$\pm$1.53    & \first{74.32$\pm$1.24}    \\
Crocodile   &  51.25$\pm$0.51   & 60.73$\pm$0.28    &  56.77$\pm$0.39    & 53.87$\pm$0.65   & 61.87$\pm$0.45    & 60.18$\pm$0.43         &  62.98$\pm$0.51      & 61.72$\pm$0.21     &\second{66.17$\pm$0.24} & 65.95$\pm$0.59    & \first{68.73$\pm$0.77}   \\
Actor       &  28.30$\pm$0.76   & 28.47$\pm$0.29    &  27.82$\pm$0.60    & 28.80$\pm$0.54   & 27.95$\pm$0.30    & 32.55$\pm$1.18          &  28.15$\pm$0.31      & 28.94$\pm$0.69     & 30.03$\pm$0.13       &\second{34.37$\pm$0.69} & \first{35.70$\pm$1.08}   \\
Wisconsin   &  55.21$\pm$1.02   & 59.55$\pm$0.81    &  58.46$\pm$0.96    & 51.23$\pm$1.17   & 63.90$\pm$0.58    & 65.28$\pm$0.52          &  62.25$\pm$0.55      & 60.12$\pm$0.39     & 69.22$\pm$0.40       &\second{76.08$\pm$3.33} & \first{79.22$\pm$5.13}    \\
Cornell     &  45.33$\pm$6.11   & 52.31$\pm$1.09    &  52.17$\pm$1.04    & 50.33$\pm$2.29   & 51.78$\pm$1.03    & 52.11$\pm$2.37          &  53.15$\pm$1.28      & 52.29$\pm$1.21     &\second{59.33$\pm$1.48}       & 43.78$\pm$3.51      & \first{67.57$\pm$5.27}   \\
Texas       &  58.53$\pm$2.98   & 52.92$\pm$0.46    &  59.89$\pm$0.78    & 52.77$\pm$1.98   & 61.83$\pm$0.71    & 60.68$\pm$1.18          &  62.11$\pm$1.53      & 59.81$\pm$1.33     & 71.08$\pm$0.34       & \second{72.16$\pm$3.51}      & \first{78.38$\pm$5.80}     \\
Roman       &  63.71$\pm$0.63   & 65.79$\pm$0.75    &  67.35$\pm$0.61    & 68.66$\pm$0.39   & 71.84$\pm$0.41  & 69.74$\pm$0.53         &  71.70$\pm$0.54      & 70.88$\pm$0.35     & \first{74.91$\pm$0.28}       & \second{72.97$\pm$0.25}      & 72.56$\pm$0.63     \\
Arxiv-year  &  39.26$\pm$0.72   & 42.96$\pm$0.39    &  37.38$\pm$0.41    & 43.02$\pm$0.62   & 43.71$\pm$0.54  & 43.92$\pm$0.52          &  45.80$\pm$0.57      & 44.11$\pm$0.35     & \second{47.21$\pm$0.39}       & 43.07$\pm$0.23      & \first{47.22$\pm$0.13}   \\
\bottomrule
\end{tabular}
}
\label{tab:noderesults_heter}
\end{table*}

\subsection{Overall Performance Comparison} We show the node classification results for homophilic and heterophilic datasets in \cref{tab:noderesults_homo} and \cref{tab:noderesults_heter}, respectively. Notably, FD-GCL achieves the best performance across most datasets, excelling in both homophilic and heterophilic datasets. Specifically, FD-GCL demonstrates significant relative improvements on heterophilic datasets compared to the second-best method, achieving improvements of $9.87\%$ on Squirrel, $5.2\%$ on Chameleon, $2.16\%$ on Wisconsin, $8.24\%$ on Cornell, $6.22\%$ on Texas, and $1.33\%$ on Actor. This is while maintaining competitive performance on homophilic graphs. Contrastive strategies such as CCA-SSG, GCA, and BGRL, which rely on augmentations, struggle on heterophilic graphs due to their implicit reliance on the homophily assumption. Meanwhile, our augmentation-free FD-GCL consistently outperforms other augmentation-free methods (e.g., L-GCL, SP-GCL, DSSL, and GraphACL) across both homophilic and heterophilic datasets. This improvement can be attributed to the FDE-based encoders, which enable more effective representations for diverse graph structures. The corresponding values of $\alpha_1$ and $\alpha_2$ for each dataset are reported in \cref{tab:hyper_parameter} in \cref{supp.hyperparameter}.

\subsection{Ablation Studies} \label{sec:as}

\paragraph{The effect of order parameters} We evaluate the performance of the FD-GCL model under various configurations of the parameters $\alpha_{1}$ and $\alpha_{2}$. Specifically, we consider cases where $\alpha_{1}$ and $\alpha_{2}$ (i) differ significantly, (ii) differ slightly, and (iii) are equal. The configurations include $\alpha_1 = \alpha_2 = 1$, $\alpha_1 = \alpha_2 = 0.5$, and $\alpha_1 = \alpha_2 = 0.1$. Additionally, we compare these configurations with a standard 2-layer GCN with skip connections as the encoder. The experiments are conducted on the Cora (homophilic), Wisconsin (heterophilic), and Crocodile (heterophilic) datasets, using classification accuracy as the evaluation metric. The results are shown in \cref{tab:fd-gcl_alpha}. The findings suggest that configurations with distinct $\alpha_{1}$ and $\alpha_{2}$ generally outperform those with equal values. Larger $\alpha_2-\alpha_1$ encourages the FDE-based encoder to generate more diverse graph views, enhancing the contrastive learning process and enabling the model to capture richer and more discriminative representations. In contrast, the equality of parameters $\alpha_1=\alpha_2$ may lead to less diverse views, potentially limiting the model's ability to learn comprehensive representations.

\begin{table}[!htb]
\caption{Node classification results (\%) across different datasets and parameter configurations.}
\resizebox{0.48\textwidth}{!}{%
\centering
\begin{tabular}{llccc}
\toprule
\multicolumn{2}{c}{Method}                                            & Cora                       & Wisconsin               &Crocodile\\
\midrule
\multicolumn{2}{c}{GCN}                                               & 56.23$\pm$0.54             & 65.10$\pm$5.60          &62.58$\pm$0.85\\
\hline
\multirow{6}{*}{FD-GCL}             & $\alpha_{1}=\alpha_{2}=1$       & 78.09$\pm$0.19             & 61.57$\pm$6.21          & 63.57$\pm$1.01 \\                               
                                    & $\alpha_{1}=\alpha_{2}=0.5$     & 81.19$\pm$0.12             & 66.27$\pm$3.59          & 60.32$\pm$0.92 \\
                                    & $\alpha_{1}=\alpha_{2}=0.1$     & 77.52$\pm$0.13             & 77.06$\pm$4.64          & 67.89$\pm$0.98 \\
\cline{2-5}
                                    & $\alpha_{1}=0.1, \alpha_{2}=0.2$ & 78.65$\pm$0.17            & 74.71$\pm$3.77          & 68.06$\pm$1.16 \\
                                    & $\alpha_{1}=0.5, \alpha_{2}=1$   & 82.53$\pm$0.13            & 63.33$\pm$6.27          & 68.42$\pm$0.71 \\                               
                                    & $\alpha_{1}=0.01, \alpha_{2}=1$  &\textbf{84.27$\pm$0.27}    &\textbf{79.22$\pm$5.13}  &\textbf{68.99$\pm$0.66}\\
\bottomrule
\end{tabular}}
\label{tab:fd-gcl_alpha}
\end{table}

\paragraph{Loss functions} To evaluate the performance of FD-GCL under different contrastive learning paradigms, we compare several widely used \emph{contrastive loss} functions in terms of their impact on the final feature representations $\bZ_{1}$ and $\bZ_{2}$. Specifically, we assess classification accuracy on two benchmark datasets: Cora (homophilic) and Wisconsin (heterophilic). The contrastive loss functions considered include Euclidean loss, Cosmean, Barlow Twins loss \citep{zbontarbarlowtwin2021}, VICReg loss \citep{bardes2022vicreg}, and our proposed Regularized Cosmean loss. The results, summarized in \cref{accuracy_loss}, highlight the effectiveness of the Regularized Cosmean loss. Unlike other loss functions, which exhibit performance degradation as training progresses, the Regularized Cosmean loss maintains consistent accuracy across training epochs, demonstrating superior stability. This consistency can be attributed to its ability to mitigate dimension collapse, ensuring reliable performance over extended training periods. Additional details on the definitions of these loss functions and further comparison results on other datasets are provided in \cref{supp.loss}.

\begin{figure}[!htb]
    \centering
    \includegraphics[width=0.95\linewidth]{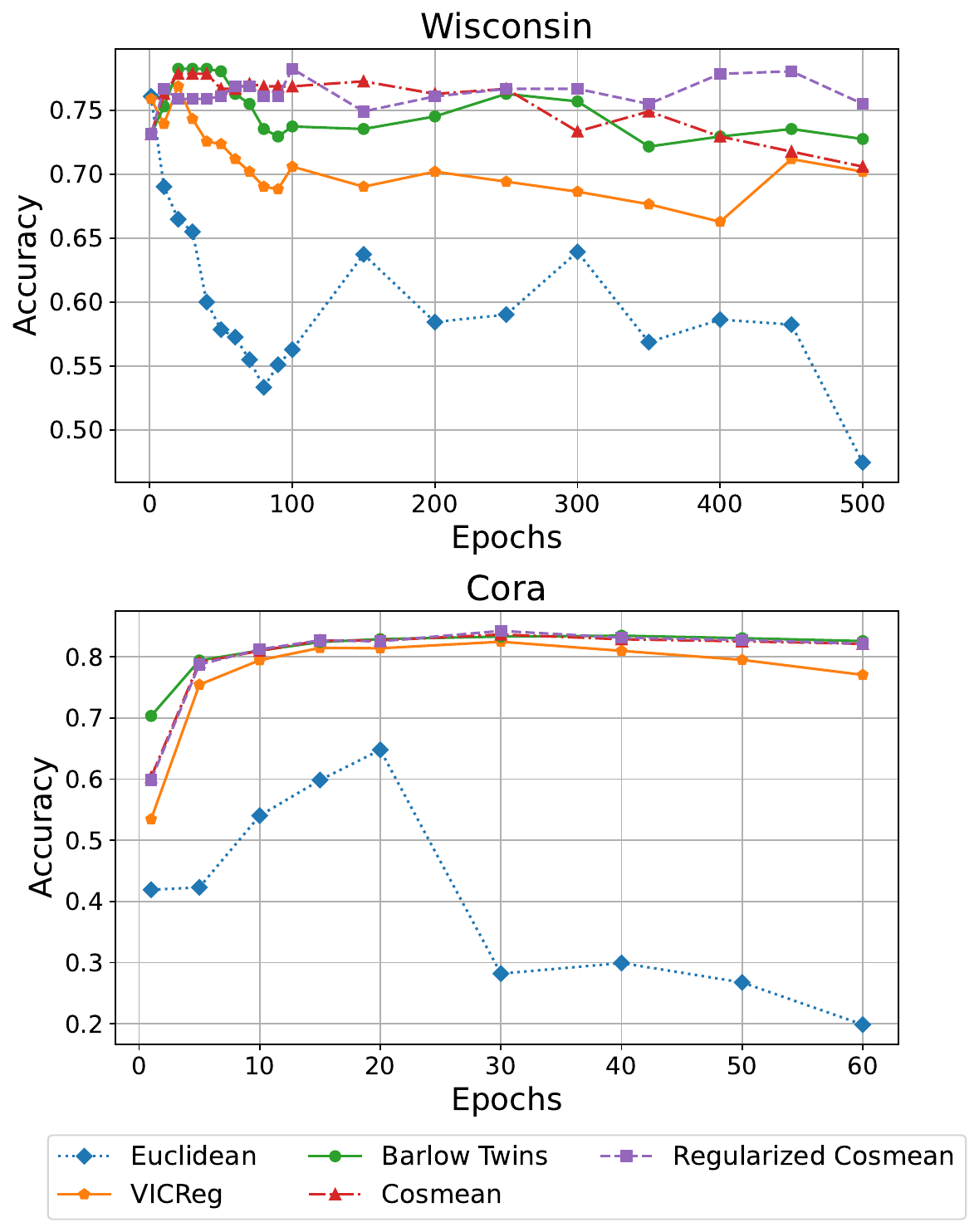}
    \caption{Accuracy vs. training epochs for various loss functions on the Cora and Wisconsin datasets.}
    \label{accuracy_loss}
\end{figure}

\subsection{Complexity Analysis}
The training time complexity of FD-GCL consists of two components: the learning of FDE encoders and the loss computation. Suppose the graph consists of $N$ nodes and $|\calE|$ edges. The numerical solution of FDE is computed iteratively for $E\coloneqq T / h$ time steps, where $h$ represents the discretization size and $T$ the integration time. At each step, the function $\calF\left(\bW, \bZ_{j}\right)$ is evaluated, with intermediate results $\left\{\calF\left(\bW, \bZ_j\right)\right\}_j$ stored for efficiency. Assuming $\mathcal{F}$ corresponds to the GRAND model \citep{chamberlain2021grand}, the cost per evaluation is $C = |\calE| d$, where $|\calE|$ represents the edge set size and $d$ the dimensionality of the features. The total complexity for solving the FDE is $O\left(E |\calE| d+E^2\right)$. With fast convolution algorithms \citep{mathieu2013fast}, this can be reduced to $O(E C+E \log E)$. Meanwhile, the computation of $\calL$ costs $O(N)$ time. Thus the overall training time complexity of FD-GCL is $O(E C+E \log E+N)$. 

Comparisons of training time and storage with baselines are summarized in \cref{tab:running_time} and \cref{tab:storage}, respectively. The results, particularly training time comparisons, support that our model is simple and efficient. It scales well as the graph size increases.

\begin{table}[!htb]
\caption{Training time (s) across different datasets with single training epoch. OOM refers to out of memory on an NVIDIA RTX A5000 GPU (24GB).}
\small
\setlength{\tabcolsep}{12pt}
\resizebox{0.45\textwidth}{!}{%
\centering
\begin{tabular}{lccc}
\toprule
Method                                       & Cora                       & Wisconsin                       & Arxiv               \\
\midrule
GraphACL                                     & 0.22                       & 0.63                             & 927.48                      \\
PolyGCL                                      & 0.34                       & 0.23                             & OOM                  \\
FD-GCL                                       & 0.28                       & 0.30                             & 3.16                 \\ 
\bottomrule
\end{tabular}}
\label{tab:running_time}
\end{table}

\begin{table}[!htb]
\caption{Storage (MiB) across different datasets. OOM refers to out of memory on an NVIDIA RTX A5000 GPU (24GB).}
\setlength{\tabcolsep}{12pt}
\small
\resizebox{0.45\textwidth}{!}{%
\centering
\begin{tabular}{lccc}
\toprule
Method                                       & Cora                   & Wisconsin                & Arxiv               \\
\midrule
GraphACL                                     & 326                    & 204                      & 6114                \\
PolyGCL                                      & 4098                   & 894                      & OOM                 \\
FD-GCL                                       & 1020                   & 556                      & 18192                    \\ 
\bottomrule
\end{tabular}}
\label{tab:storage}
\end{table}

\section{Conclusion}
We have proposed a simple and effective augmentation-free graph contrastive learning framework using graph neural diffusion models governed by fractional differential equations. By varying the order parameter, our method generates diverse views that capture both local and global graph information, eliminating the need for both complex augmentations and negative samples. It achieves state-of-the-art performance across diverse datasets. Future work could explore adaptive or data-driven strategies to improve efficiency and scalability, particularly, in tuning the order parameters.

\section{Limitations}

While our framework achieves state-of-the-art performance and removes the need for complex augmentations and negative samples, it has some limitations. The order parameters must be tuned manually, which may impede large-scale or real-time applications. The diffusion process can be expensive on massive graphs, and generalizability to highly irregular or evolving topologies remains understudied. Furthermore, performance may degrade in low-data settings, where there is insufficient data to guide parameter optimization. Future work could address adaptive parameter selection, scalability improvements, and robust methods for highly dynamic graphs.

\newpage

\section*{Impact Statement}
This paper introduces a pioneering framework for augmentation-free contrastive learning by means of using FDE-based graph neural diffusion models, poised to significantly influence both the development of GNNs and their applications across diverse domains. By leveraging continuous dynamics governed by fractional differential equations, our approach enhances the flexibility and robustness of representation learning while eliminating the need for both complex augmentations and negative samples. The societal impact of this work depends on a commitment to ethical standards and responsible use, ensuring that advancements in contrastive learning lead to positive outcomes without exacerbating biases, inequality, or misuse in sensitive applications.

\nocite{langley00}

\bibliography{bib/CL_FDE}
\bibliographystyle{icml2025}

\newpage
\appendix
\onecolumn
\section{List of Notations}
For easy reference, we list the most used notations in \cref{tab:lon}.

\begin{table}[!ht]
\caption{List of notations}
\label{tab:lon}
\begin{center}
\scalebox{1}{
\begin{tabular}{ c | c } 
  \toprule
  Graph & $\calG = (\calV, \calE)$ \\
   \midrule
  Nodes & $v_1,\ldots,v_N$ \\
  \midrule
  Adjacency (and normalized adj.) matrix  & $\bA = (a_{ij}), \overline{\bA}$\\
  \midrule
  Normalized Laplacian & $\overline{\bL} = \bU\bLambda\bU\T$ \\
  \midrule
  Eigenvectors and eigenvalues & $\bu_i, \lambda_i$\\
  \midrule
  Encoders & $f_{\theta}, f_{\theta_1}, f_{\theta_2}$ \\
  \midrule
 Differential order parameters & $\alpha, \alpha_1, \alpha_2$  \\
 \midrule
 Differential operators & $\ud/\ud t, D^{\alpha}_t$ \\
  \midrule
   Initial features  & $\bX = [\bx_i\T]$ \\
 \midrule 
Learned features & $\bZ, \bZ_1, \bZ_2, \bZ_{\alpha_1}(t), \bZ_{\alpha_2}(t)$ \\
 \midrule 
Linear encoder parameters & $\bW_l$ \\
\midrule
Loss functions  & $\calL_0, \calL$ \\
 \bottomrule
\end{tabular}
} 
\end{center}
\end{table}

\section{More on Related Works}\label{related_work}
\subsection{Graph Contrastive Learning without Augmentation}
Deep Graph Infomax (DGI) \citep{velickovic2019dgi} is a foundational framework in graph contrastive learning, maximizing mutual information (MI) between local node features and a global graph representation. It employs a readout function to aggregate node features into a global representation and a discriminator to distinguish positive samples, derived from the original graph, from negative samples, corrupted graph via shuffled node features. This corruption serves as augmentation, boosting robustness and generalizability. Contrastive Multi-view Representation Learning (MVGRL) \citep{HassaniICML2020} extends this approach by incorporating multiple graph views derived from different graph diffusion processes. Its discriminator contrasts node-level and graph-level representations across views, enhancing representation quality. Cross-Scale Contrastive Graph Knowledge Synergy (CGKS) \citep{YifeiCGKS2023} further builds a graph pyramid of coarse-grained views and introduces a joint optimization strategy with a pairwise contrastive loss to promote knowledge transfer across scales. 

GRACE \citep{zhu2020GRACE} adopts a unique strategy by generating two graph views through edge removal and node feature masking, then maximizing the agreement between their node embeddings. It also leverages both inter-view and intra-view negative pairs to enrich the learning process. GCA \citep{zhu2021GCA} builds on this by introducing adaptive augmentation for graph-structured data, leveraging priors from both topological and semantic graph information. Unlike prior methods that rely on two correlated views, ASP \citep{ChenASP2023} incorporates three distinct views—original, attribute, and global—into a joint contrastive learning framework, enhancing representation learning across these perspectives.

GraphCL \citep{You2020GraphCL} introduces various augmentation strategies specifically designed for graph data. DSSL \citep{Teng2022DSSL} and HGRL \citep{Chen2022HGRL} extend unsupervised learning to nonhomophilous graphs by capturing global and high-order information. HGRL relies on graph augmentations, while DSSL assumes a graph generation process, which may not always reflect real-world graphs. While these methods have advanced graph contrastive learning, augmentation-based approaches face limitations. Their performance is sensitive to the choice of augmentation, and no universally optimal strategy exists. Additionally, augmentation-based GCL methods tends to focus the encoder on capturing low-frequency information while neglecting high-frequency components, which reduces performance on heterophilic graphs \citep{Liunian2022}.

To address these limitations, augmentation-free methods have emerged. Followed by DGI, Graphical Mutual Information (GMI) \citep{peng2020gmi} directly measures mutual information (MI) between input data and representations of nodes and edges without relying on data augmentation, providing a more direct approach to optimizing information preservation. Similarly, L-GCL \citep{zhang2022LGCL} avoids augmentations but focuses on homophilous graphs. SP-GCL \citep{Wang2023SPGCL}, on the other hand, effectively handles heterophilic graphs by capturing both low- and high-frequency components. GraphACL \citep{Teng2023GraphACL} further eliminates both augmentations and homophily assumptions, ensuring robust and consistent performance across diverse graph structures. PolyGCL \citep{chen2024polygcl} leverages polynomial filters with learnable parameters to generate low-pass and high-pass spectral views, achieving contrastive learning without relying on complex data augmentations.

\subsection{Graph Contrastive Learning without Negative Sample Pairs}
Building on the success of BYOL in image data, BGRL \citep{thakoor2022BGRL} eliminates the need for negative samples in graph contrastive learning. It generates two graph augmentations using random node feature masking and edge masking and employs an online encoder and a target encoder. The objective is to maximize the cosine similarity between the online encoder's prediction and the target encoder's embedding. A stop-gradient operation in the target encoder prevents mode collapse, ensuring stable training. 

Augmentation-Free Graph Representation Learning (AFGRL) \citep{Lee2022AFGRL} addresses the limitations of augmentation-dependent methods like BGRL and GCA \citep{zhu2021GCA}, where representation quality heavily depends on the choice of augmentation schemes. Building on the BGRL framework, AFGRL eliminates the need for augmentations by generating positive samples directly from the original graph for each node. This approach captures both local structural information and global semantics. However, it introduces higher computational costs.

Inspired by Canonical Correlation Analysis (CCA) methods \citep{Hardoon2004}, CCA-SSG \citep{Zhang2021CCA_SSG} introduces an unsupervised learning framework for graphs without relying on negative sample pairs. It maximizes the correlation between two augmented views of the same input while decorrelating the feature dimensions within a single view's representation. 

These advancements highlight promising alternatives to traditional graph contrastive learning methods. Employing augmentation-free frameworks or innovative masking strategies mitigates challenges associated with negative sample selection and augmentation dependency, offering robust solutions for graph representation learning.

\section{Integer-order Continuous GNNs}\label{appendix.ode_models}
Recent works, including GRAND \citep{chamberlain2021grand}, GraphCON \citep{rusch2022graph}, GREAD \citep{choi2023gread}, CDE \citep{ZhaKanSon:C23}, have employed ordinary or partial differential equations (ODEs/PDEs) on graphs for feature aggregation. These continuous \gls{GNN} models typically use the usual integer-order derivatives. We shall highlight the governing differential equation in each of these models. 

\textbf{GRAND:} GRAND models heat diffusion on graphs. For the basic GRAND-l model, the governing differential equation for the diffusion process is:
\begin{align}
\frac{\ud \bZ(t)}{\ud  t} =  -\overline{\bL} \bZ(t). \label{eq.gra_dif_L}
\end{align}

More generally, the adjacency matrix can be updated during learning via the attention between node features, and the resulting model is called GRAND-nl. 

\textbf{GraphCON}: GraphCON is governed by a second-order ODE modeling oscillator dynamical systems:
\begin{align}
 \frac{{\ud^2} \bZ(t)}{{\ud} t^2} = \sigma (\mathbf{F}_{\theta}(\bZ(t), t)) - \gamma\bZ(t) -\nu\frac{\ud \bZ(t)}{{\ud} t}, \label{eq.GraphCON2}
\end{align}
where $\mathbf{F}_{\theta}(\cdot)$ represents a learnable $1$-neighborhood coupling function, $\sigma$ is an activation function, and $\gamma$ and $\nu$ are adjustable parameters.

\textbf{CDE}: CDE is Based on the convection-diffusion equation. It includes a diffusion term and a convection term, and the latter is to address information propagation from heterophilic neighbors. More specifically, the ODE takes the following form:
\begin{align}
\frac{{\ud} \bZ(t)}{{\ud} t}
= -\overline{\bL} \bZ(t)+ \operatorname{div}(\bV(t) \circ \bZ(t)). \label{eq.CDE}
\end{align}
Compared with GRAND, there is an addition of the term involving $\bV$. To explain, $\bV_{ij}(t)\in \Real^d$ is the velocity vector associated with each edge $(i,j)$ at time $t$. One has ($\calE$ be the edge set):
\begin{align}
    \text{$i$-th row of } (\operatorname{div}(\bV(t) \circ \bZ(t))) = \sum_{j:(i, j) \in \mathcal{E}} \bV_{ij}(t) \odot  \bz_j(t). \label{eq.cde1}
\end{align}
The velocity $\bV_{ij}(t)$ is given by
\begin{align}
    \bV_{ij}(t)=  \sigma\left(\bM(\bz_j(t)-\bz_i(t))\right), \label{eq.cde2}
\end{align}
with $\bM$ is a learnable matrix and $\sigma$ is an activation function.

\textbf{GREAD:} GREAD is also proposed to handle heterophilic graphs. It has a similar high-level idea to CDE by adding a reaction term to the ODE of GRAND, thereby establishing a diffusion-reaction equation for \Glspl{GNN}. The governing equation for this model is expressed as:
\begin{align}
    \frac{\ud \bZ(t)}{\ud t} = -\gamma \overline{\bL}(\bZ(t)) + \nu r(\bZ(t)),
    \label{eq.gread_diff}
\end{align}
where \( r(\bZ(t)) \)  is the reaction term, $\gamma$ and $\nu$ are trainable weight parameters.

\section{Fractional-order Derivatives and  GNNs}\label{appendix.fde_models}
\citet{KanZhaDin:C24frond,ZhaKanJi:C24} extend the ODE-based approaches by incorporating graph neural \gls{FDEs}, generalizing the order of the derivative to positive real number $\alpha$. The motivation is the solution to an FDE that encodes historical information and thus retains the ``memory'' of the evolution. 

\subsection{Fractional-order Derivatives}\label{sec.cap_diff}
Recall that the first-order derivative of a scalar function $f(t)$ is defined as the rate of change of $f$ at a time $t$:
\begin{align*}
    f'(t) = \frac{\ud f(t)}{\ud t}=\lim _{\Delta t \rightarrow 0} \frac{f(t+\Delta t)-f(t)}{\Delta t}
\end{align*}
It is local in the sense that $f'(t)$ is determined by the value of $f$ in a small neighborhood of $t$.

Fractional-order derivatives $D^{\alpha}_t$ generalize integer derivatives and the domain of $\alpha$ is extended to any positive real number. By the relation $D^{\alpha+1}_t= D^{\alpha}_tD^1_t$, it suffices to define $D^{\alpha}_t$ for the order parameter $\alpha\in (0,1)$. For the encoder design in this paper, we only consider $\alpha \in (0,1]$. The domain is large enough for us to choose encoders that generate different views. Moreover, solving a higher-order FDE requires additional information such as the initial derivative, which is usually not available.   

Although there are different approaches to defining $D^{\alpha}_t$, they all share the common characterization that the derivative is defined using an integral. Hence, the operator is ``global'' as historical values of the function are used. We show two approaches below.

The \emph{left fractional derivative} \citep{Sti23} of $f(t)$ is defined by the following integral
\begin{align*} 
    D^{\alpha}_tf(t) = \frac{1}{\Gamma(-\alpha)}\int_0^{\infty} \frac{f(t-\tau)-f(t)}{\tau^{1+\alpha}}d\tau, 
\end{align*}
where $\Gamma(\cdot)$ is the Gamma function (see \cref{eq:gamma} below).

On the other hand, the \emph{Caputo fractional derivative} \citep{diethelm2010analysis} is defined as
\begin{align*} 
D_t^\alpha f(t)=\frac{1}{\Gamma(1-\alpha)} \int_0^t \frac{f^{\prime}(\tau)}{(t-\tau)^{\alpha}} \ud\tau.
\end{align*}

One can find other approaches and more discussions in \citet{tarasov2011fractional}.  

\subsection{Incorporating FDE}
To build a diffusion model based on FDE, it usually suffices to replace the integer order derivative with a fractional-order derivative in the governing differential equation. For example, the ODE in the GraphCON model \citep{rusch2022graph} is equivalent to a system of equations:
\begin{align*}
    \begin{aligned} 
    & \frac{\ud \bY(t)}{\ud t} =\sigma\left(\mathbf{F}_\theta(\bZ(t), t)\right)-\gamma \bZ(t)-\nu \bY(t) \\ 
    & \frac{\ud \bZ(t)}{\ud t} =\bY(t).\end{aligned}
\end{align*}
Replacing $\ud/\ud t$ with $D^{\alpha}_t$, one obtains its FDE version as 
\begin{align*}
    \begin{aligned} 
    & D_t^{\alpha} \bY(t)=\sigma\left(\mathbf{F}_\theta(\bZ(t), t)\right)-\gamma \bZ(t)-\nu \bY(t) \\ 
    & D_t^{\alpha} \bZ(t)=\bY(t).\end{aligned}
\end{align*}

A numerical solver is provided in \citet{KanZhaDin:C24frond}.

\section{Theoretical Discussions} \label{sec:td}

In this section, we provide a rigorous discussion of the formal version of \cref{thm}. To do this, we need the Mittag-Leffler functions, which were introduced by Mittag-Leffler in 1903 in the form of a Maclaurin series. We focus on a special case defined as follows:
\begin{align*}
e_{\alpha}(\lambda,t) = \sum_{n=0}^{\infty} (-1)^n\frac{\lambda^n t^{\alpha n}}{\Gamma(\alpha n + 1)}, \quad \lambda>0, \ t\geq 0.
\end{align*}
Here, $\Gamma(\cdot)$ denotes the Gamma function, which is a meromorphic function on $\mathbb{C}$ defined by the integral
\begin{align} \label{eq:gamma}
\Gamma(z) = \int_0^{\infty} t^{z-1} e^{-t} \ud t,
\end{align}
for $\Re(z) > 0$.

We make the following \emph{assumptions} regarding the encoder. The function $\calF$ in \cref{eq.frond_main} is the simple $-\overline{\bL}\bX$ as described after \cref{eq.ode}. For $D^{\alpha}_t$, we consider the left fractional derivative. For the model, we apply skip-connection after a diffusion of length $\tau$, and there are $m$-iterations so that the total time is $T=m\tau$. We shall consider $\alpha \in (0,1)$, and this a mild assumption, as we have seen that $D^1_t = \ud/\ud t$ is the limit of $D^{\alpha}_t$ when $\alpha \to 1$. The following formal version of \cref{thm} analyzes the (asymptotic) properties of the Fourier coefficients of the output features. 

\begin{Theorem} \label{thm:sgi}
Suppose $\calG$ is a connected graph and $0<\alpha_1<\alpha_2<1$ are the order parameters. For $l=1,2$, let $n_l \geq 1$ be the integer such that $n_l\alpha_l < 1 \leq (n_l+1)\alpha_l$. Consider a graph signal $\bx$ with Fourier coefficients $\{c_i, 1\leq i \leq N\}$. Let $\bz_{\alpha_l}(T)$ be the output features of the encoders with order parameters $\alpha_l$. Its Fourier coefficients are $\{c_{\alpha_l,i}(T), 1\leq i\leq N\}$. Then the following holds:
\begin{enumerate}[(a)]
\item We have the expression: 
\begin{align*}
c_{\alpha_l,i}(T) = \Big(\sum_{j=0}^{n_l}b_{\alpha_l,i,j}\tau^{-j\alpha_l} + O(\frac{1}{\tau})\Big) c_i,
\end{align*}
and $b_{\alpha_l,i,j}>0$. 
\item For fixed $l$ and $j$, the coefficient $b_{\alpha_l,i,j}$ is decreasing w.r.t.\ $i$.
\item For fixed $i$ and $j$, we have $b_{\alpha_1,i,j}> b_{\alpha_2,i,j}$. 
\end{enumerate}
\end{Theorem}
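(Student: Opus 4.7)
The plan is to exploit the spectral decomposition $\overline{\bL} = \bU\bLambda\bU\T$ to decouple the FDE into independent scalar equations, one per Fourier mode. Since $\calF(\bW,\bZ) = -\overline{\bL}\bZ$, projecting the governing equation onto the eigenvector $\bu_i$ reduces the problem to the scalar FDE $D_t^{\alpha_l} c(t) = -\lambda_i c(t)$, whose solution with initial value $c_i$ is expressible through the classical Mittag-Leffler function $E_{\alpha_l}(-\lambda_i t^{\alpha_l})$. Accounting for the periodic skip-connection mechanism---diffuse for time $\tau$, add back the initial features, iterate $m$ times so that $T = m\tau$---I would write the $i$-th Fourier coefficient of the output as $c_{\alpha_l,i}(T) = P_{m}(\phi_l)\, c_i$ for an explicit polynomial $P_m$ in the single scalar $\phi_l := E_{\alpha_l}(-\lambda_i \tau^{\alpha_l})$.

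The next step is to plug the standard large-argument asymptotic expansion of the Mittag-Leffler function,
\begin{align*}
E_{\alpha}(-x) \;=\; \sum_{k=1}^{K} \frac{(-1)^{k+1}}{\Gamma(1-\alpha k)}\, x^{-k} \;+\; O\!\bigl(x^{-(K+1)}\bigr), \qquad x\to+\infty,
\end{align*}
with $x = \lambda_i\tau^{\alpha_l}$, into $P_m(\phi_l)$. This produces a series in $\tau^{-\alpha_l}$. By the defining inequality $n_l\alpha_l < 1 \leq (n_l+1)\alpha_l$, every exponent $j\alpha_l$ with $j\le n_l$ is strictly less than $1$, while all higher-order contributions are of order at most $\tau^{-(n_l+1)\alpha_l}\le \tau^{-1}$ and are absorbed into the $O(1/\tau)$ remainder. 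Regrouping by powers of $\tau^{-\alpha_l}$ then yields the representation claimed in part~(a), with each $b_{\alpha_l,i,j}$ expressible as a finite sum of products of reciprocals of $\Gamma(1-\alpha_l k)$ and of inverse powers $\lambda_i^{-k}$.

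With the explicit coefficient formula in hand, I would verify the three properties in turn. Monotonicity in $i$ (part~(b)) follows because each summand in $b_{\alpha_l,i,j}$ depends on $\lambda_i$ only through nonpositive powers $\lambda_i^{-k}$, and the graph frequencies are nondecreasing in $i$. For the strict $\alpha$-comparison in part~(c), I would track the $\alpha$-dependence summand by summand: the dependence enters only through $\Gamma(1-\alpha_l k)$ factors together with the combinatorial weights produced by expanding $P_m(\phi_l)$, and the strict inequality reduces to a monotonicity lemma for $\alpha\mapsto 1/\Gamma(1-\alpha k)$ on the interval where the leading contribution lives. Positivity in part~(a) is more delicate because the Mittag-Leffler expansion has alternating signs; I would rearrange $P_m(\phi_l)$ in closed form---for instance via a geometric-sum identity---before expanding, so that the skip-connection structure combines the signed terms into unambiguously positive coefficients.

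The hardest part will be verifying (a) and (c). The signs of $1/\Gamma(1-\alpha_l k)$ begin to oscillate once $\alpha_l k$ exceeds $1$, so a naive term-by-term expansion mixes signs; a clean alternative is to invoke the complete monotonicity of $t\mapsto E_{\alpha_l}(-\lambda_i t^{\alpha_l})$ for $\alpha_l\in(0,1)$, which identifies this function with the Laplace transform of a nonnegative $\alpha_l$-stable density. That probabilistic representation furnishes all sign information directly, and, through a monotone coupling argument in $\alpha$, should also yield the strict inequality in~(c).
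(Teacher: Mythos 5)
Your overall route is the same as the paper's: diagonalize $\overline{\bL}$ so that \cref{eq.frond_main} splits into scalar fractional relaxation equations per Fourier mode, solve each by a Mittag--Leffler function, encode the skip-connection mechanism as the geometric-type sum $1+e_{\alpha}(\lambda,\tau)+\dots+e_{\alpha}(\lambda,\tau)^m$ acting on $c_i$ (the paper's \cref{eq:b1e}), insert the large-$\tau$ Erd\'elyi expansion, truncate at $n_l$ using $n_l\alpha_l<1\leq(n_l+1)\alpha_l$ with the remainder absorbed into $O(1/\tau)$, and obtain (b) from the fact that only nonpositive powers $\lambda_i^{-k}$ appear and $\lambda_i$ is nondecreasing in $i$, and (c) from the monotonicity of $\Gamma$ on $(0,1]$. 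This is exactly the skeleton of the paper's proof of \cref{thm:sgi}.

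Where you diverge is in the sign bookkeeping for (a) and (c), and that is also where your plan stops short of a proof. The paper quotes the expansion \cref{eq:elt} with every coefficient $1/\bigl(\lambda_i^{j}\Gamma(1-j\alpha_l)\bigr)$ positive, so positivity and the $\alpha$-comparison are read off termwise; you instead use the standard signed form $(-1)^{k+1}/\Gamma(1-\alpha k)$, correctly observe that the signs mix once $j\ge 2$ (i.e.\ once $\alpha_l<1/2$, which is the regime the paper actually uses, e.g.\ $\alpha_1\le 0.1$), and then defer the resolution to ``complete monotonicity of $E_{\alpha}(-\lambda t^{\alpha})$'' plus an unspecified ``monotone coupling in $\alpha$''. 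Complete monotonicity controls the sign of the function and of its derivatives in $t$; it says nothing direct about the regrouped coefficients $b_{\alpha_l,i,j}$ of an asymptotic expansion of a polynomial in $E_{\alpha}$, so it does not by itself deliver (a), and it certainly does not deliver the strict coefficientwise inequality (c). What is actually needed is a coefficient-level computation: for instance, for $j=2$ the $\tau^{-2\alpha_l}$ coefficient of the geometric sum is $a_1^2+a_2$ with $a_1=1/\bigl(\lambda_i\Gamma(1-\alpha_l)\bigr)$ and $a_2=-1/\bigl(\lambda_i^{2}\Gamma(1-2\alpha_l)\bigr)$, and its positivity follows from log-convexity of $\Gamma$, namely $\Gamma(1-\alpha_l)^2\le\Gamma(1-2\alpha_l)$, not from the probabilistic representation; an analogous explicit comparison (tracking the cross terms generated by the powers $e_{\alpha}(\lambda,\tau)^k$) is what a complete proof of (c) requires. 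So your proposal matches the paper's argument in structure, but the two properties you yourself flag as hardest are left as an idea rather than carried out, and the particular device you propose for them would not suffice as stated.
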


\begin{proof}
Let $\overline{\bL} = \bU\bLambda\bU\T$ be the orthogonal eigendecomposition of $\overline{\bL}$. As $\overline{\bL}$ is positive semi-definite, its (ordered) eigenvalues satisfy $0=\lambda_1 < \lambda_2 \leq \lambda_3 \ldots \leq \lambda_N$. Notice we have $\lambda_2>0$ as $\calG$ is connected. To solve the FDE $D^{\alpha}_t\bz_{\alpha}(t) = -\overline{\bL}\bz_{\alpha}(t)$, we perform eigendecomposition:
\begin{align*}
    D^{\alpha}_t\bz_{\alpha}(t) = -\bU\bLambda\bU\T\bz_{\alpha}(t), \text{ and hence } D^{\alpha}_t\bU\T\bz_{\alpha}(t) = -\bLambda\bU\T\bz_{\alpha}(t). 
\end{align*}
Therefore, the spectral decomposition of the dynamics $\bz_{\alpha}(t)$ follows the equation $D^{\alpha}_t\bc_{\alpha}(t) = -\bLambda \bc_{\alpha}(t)$, where the $i$-th entry of $\bc_{\alpha}(t)$ is $c_{\alpha,i}(t)$. 

For consistency, we set $e_{\alpha}(0,t)=1$ the constant function independent of $\alpha$. Therefore, by \citet{Sti23}, we have the solution $c_{\alpha,i}(t) = e_{\alpha}(\lambda,t)c_{\alpha,i}(0), 1\leq i\leq N$, where $c_{\alpha,i}(0) = c_i$ depends only on the input feature. Recall we perform diffusion for a time $\tau$ followed by a skip-connection for $m$ iterations. The coefficients of the spectral decomposition of the final output features are 
\begin{equation} \label{eq:b1e}
\begin{split}
&c_{\alpha,i}(T) = \Big(1 +\ldots + e_{\alpha}(\lambda,\tau)\big(1+e_{\alpha}(\lambda,\tau)\big)\Big)c_{\alpha,i}(0) \\
= & \Big(1 + e_{\alpha}(\lambda,\tau) + \ldots + e_{\alpha}(\lambda,\tau)^m\Big)c_{\alpha,i}(0). 
\end{split}
\end{equation}
The case $c_{\alpha,i}(0)= c_i = 0$ is trivial as we can set $b_{\alpha_l,i,j}$ to be any positive number. We assume $c_{\alpha,i}(0) \neq 0$ for the rest of the proof, and want to estimate $1 + e_{\alpha}(\lambda,\tau) + \ldots + e_{\alpha}(\lambda,\tau)^m$ for $\alpha = \alpha_1$ and $\alpha_2$. 

By \citet{erd55}, if $\lambda>0$, $e_{\alpha_l}(\lambda, \tau)$ satisfies the following asymptotic estimation:
\begin{align} \label{eq:elt}
e_{\alpha_l}(\lambda, \tau) = \sum_{j=1}^{n_l}\frac{1}{\lambda_i^j\Gamma(1-j\alpha_l)}\tau^{-j\alpha_l} + O(\frac{1}{\tau}).
\end{align}

The Gamma function $\Gamma(\cdot)$ is positive and strictly decreasing on the interval $(0,1]$ (e.g., $\Gamma(1) = 1!=1$). Therefore, as $1-j\alpha_l >0$, the coefficient of $\tau^{-j\alpha_l}$ in \cref{eq:elt} is positive. Moreover, for $l=1,2$ and $i\geq 2$, we have 
\begin{align} \label{eq:flg}
\frac{1}{\lambda_i^j\Gamma(1-j\alpha_1)} > \frac{1}{\lambda_i^j\Gamma(1-j\alpha_2)} \text{ and } \frac{1}{\lambda_i^j\Gamma(1-j\alpha_l)} \geq \frac{1}{\lambda_{i+1}^j\Gamma(1-j\alpha_l)}.
\end{align}

Now to compare the solution for $l=1,2$, we express $e_{\alpha_1}(\lambda,\tau)^k$ as follows
\begin{align*}
e_{\alpha_1}(\lambda,\tau)^k = \Big(\sum_{j=1}^{n_2}\frac{1}{\lambda_i^j\Gamma(1-j\alpha_1)}\tau^{-j\alpha_1} + \sum_{j=n_2+1}^{n_1}\frac{1}{\lambda_i^j\Gamma(1-j\alpha_1)}\tau^{-j\alpha_1} \Big)^k + O(\frac{1}{\tau}).
\end{align*}
As compared with 
\begin{align*}
e_{\alpha_2}(\lambda,\tau)^k = \Big(\sum_{j=1}^{n_2}\frac{1}{\lambda_i^j\Gamma(1-j\alpha_2)}\tau^{-j\alpha_2} \Big)^k + O(\frac{1}{\tau}),
\end{align*}
we see that up to $O(\frac{1}{\tau})$, $e_{\alpha_2}(\lambda,\tau)^k$ and hence $c_{\alpha_2,i}(T)/c_{\alpha_2,i}(0)$ (cf.\ \cref{eq:b1e}) consists of a summation of terms $\tau^{-j\alpha_2}$ for $1\leq j\leq n_2$, and the positive coefficient $\tau^{-j\alpha_2}$ is smaller than that of $\tau^{-j\alpha_1}$ in $c_{\alpha_1,i}(T)/c_{\alpha_1,i}(0)$. Moreover, $c_{\alpha_1,i}(T)/c_{\alpha_1,i}(0)$ has addition terms $\tau^{-j\alpha_1}, n_2<j\leq n_1$ with positive coefficients. This proves (a) and (c). From the second equality in \cref{eq:flg}, we obtain the conclusion that $b_{\alpha_l,i,j}$ is decreasing w.r.t.\ $i$, as claimed in (b).
 \end{proof}

To explain \cref{thm}, $t$ in \cref{thm} corresponds to $T = m\tau$ in \cref{thm:sgi}, and once the number of iterations $m$ is fixed, $\tau \to \infty$ if and only if $t\to \infty$. The signal $\bx$ in \cref{thm:sgi} is a component (i.e., a column) of the feature matrix $\bX$. By \cref{thm:sgi}, the Fourier coefficients $c_{\alpha_l,i}(t)$ of $\bZ_{\alpha_l}(t), l=1,2$, relative to the Fourier coefficients of the input features, decreases as the frequency index $i$ increases. Moreover, the decrement for $\bZ_{\alpha_1}(t)$ is much slower than that of $\bZ_{\alpha_2}(t)$ as $t$ increases. The difference in the rate increases accordingly as $\alpha_2-\alpha_1$ becomes larger. As the leading Fourier coefficients ($i=0$) are both the same ($=m$), the Fourier coefficients of $\bZ_{\alpha_1}(t)$ are more spread, whence the claims of \cref{thm}. 

For most datasets studied in the paper, we choose $\alpha_1 \leq 0.1$ and $\alpha_2> 0.5$, and thus $n_2=1$. By \cref{eq:b1e} and \cref{eq:elt}, the following holds for $c_{\alpha_2,i}(T)$:
\begin{align*}
c_{\alpha_2,i}(T)/c_{\alpha_2,i}(0) = 1 + \frac{1}{\lambda_i\Gamma(1-\alpha_2)}\tau^{-\alpha_2} + O(\frac{1}{\tau}).
\end{align*}
For $c_{\alpha_1,i}(T)/c_{\alpha_1,i}(0)$, we can identify a summand $1/\big(\lambda_i\Gamma(1-\alpha_1)\big)\cdot \tau^{-\alpha_2}$. The ratio between $1/\big(\lambda_i\Gamma(1-\alpha_1)\big)\cdot \tau^{-\alpha_1}$ and $1/\big(\lambda_i\Gamma(1-\alpha_2)\big)\cdot \tau^{-\alpha_2}$ is at least $\sqrt{\pi}/1.07 \cdot \tau^{0.4}$. This gives us some understanding of how these two sets of Fourier coefficients differ. 

\section{Experimental Details}
\subsection {Details of Datasets}\label[Appendix]{dataset_statistic}
\begin{table*}[!htp]
\small
\centering
\caption{Statistics of homophilic and heterophilic graph datasets}
\label{tab:das}
\setlength{\tabcolsep}{10pt}
\resizebox{0.7\textwidth}{!}
{\begin{tabular}{ccccccc}
\toprule
 Dataset    & Nodes    & Edges   & Classes  & Node Features   & Data splits \\
\midrule
Cora        &  2708      &  5429        &   7   &  1433        & standard          \\
Citeseer    &  3327      &  4732        &   6   &  3703        & standard          \\
PubMed      &  19717     &  88651       &   3   &  500         & standard          \\
Computer    &  13752     &  574418      &   10  &  767         & 10\%/10\%/80\%    \\  
Photo       &  7650      &  119081      &   8   &  745         & 10\%/10\%/80\%    \\
Ogbn-arxiv  &  169343    &  1166243     &  40   &  128         & standard          \\
\midrule
Texas       &  183       &  309         &   5   &  1793        & 48\%/32\%/20\%    \\
Cornel      &  183       &  295         &   5   &  1703        & 48\%/32\%/20\%    \\
Wisconsin   &  251       &  466         &   5   &  1703        & 48\%/32\%/20\%    \\
Chameleon   &  2277      &  36101       &   5   &  2325        & 48\%/32\%/20\%    \\
Squirrel    &  5201      &  217073      &   5   &  2089        & 48\%/32\%/20\%    \\
Crocodile   &  11631     &  360040      &   5   &  2089        & 48\%/32\%/20\%    \\
Actor       &  7600      &  33391       &   5   &  932         & 48\%/32\%/20\%    \\
Roman       &  22662     &  32927       &   18  &  300         & 50\%/25\%/25\%    \\
Arxiv-year  &  169343    &  1166243     &   5   &  128         & 50\%/25\%/25\%    \\
\bottomrule
\end{tabular}}
\end{table*}
Detailed descriptions of the datasets are given below:

\textbf{Cora, Citeseer, and Pubmed} \citep{kipf2017semi}. These datasets are among the most widely used benchmarks for node classification. Each dataset represents a citation graph with high homophily, where nodes correspond to documents and edges represent citation relationships. Node class labels reflect the research field, and node features are derived from a bag-of-words representation of the abstracts. The public dataset split is used for evaluation, with 20 nodes per class designated for training, and 500 and 1,000 nodes fixed for validation and testing, respectively.

\textbf{Computer and Photo} \citep{thakoor2022BGRL,McAuley2015}. These datasets are co-purchase graphs from Amazon, where nodes represent products, and edges connect products frequently bought together. Node features are derived from product reviews, while class labels correspond to product categories. Following the experimental setup in \citet{zhang2022LGCL}, the nodes are randomly split into training, validation, and testing sets, with proportions of 10\%, 10\%, and 80\%, respectively.

\textbf{Ogbn-arxiv (Arxiv)} \citep{hu2020ogbn}. This dataset is a citation network of Computer Science (CS) papers on arXiv. Each node represents a paper, and edges indicate citation relationships. Node features are 128-dimensional vectors obtained by averaging word embeddings from the paper's title and abstract, generated using the skip-gram model on the MAG corpus. Consistent with \citet{hu2020ogbn}, the public split is used for this dataset.

\textbf{Texas, Wisconsin and Cornell} \citep{Benedek2021}. These datasets are webpage networks collected by Carnegie Mellon University from computer science departments at various universities. In each network, nodes represent web pages, and edges denote hyperlinks between them. Node features are derived from bag-of-words representations of the web pages. The task is to classify nodes into five categories: student, project, course, staff, and faculty.

\textbf{Chameleon, Crocodile and Squirrel} \citep{Benedek2021}. These datasets represent Wikipedia networks, with nodes corresponding to web pages and edges denoting hyperlinks between them. Node features are derived from prominent informative nouns on the pages, while node labels reflect the average daily traffic of each web page.

\textbf{Actor} \citep{Pei2020GeomGCN}. This dataset is an actor-induced subgraph extracted from the film-director-actor-writer network. Nodes represent actors, and edges indicate their co-occurrence on the same Wikipedia page. Node features are derived from keywords on the actors' Wikipedia pages, while labels categorize the actors into five groups based on the content of their Wikipedia entries.

For \textbf{Texas, Wisconsin, Cornell, Chameleon, Crocodile, Squirrel, and Actor} datasets, we utilize the raw data provided by Geom-GCN \citep{Pei2020GeomGCN} with the standard fixed 10-fold split for our experiments. These datasets are available for download at: \url{https://github.com/graphdml-uiuc-jlu/geom-gcn}.

\textbf{Roman-empire (Roman)} \citep{Platonov2023datapaper} is a heterophilous graph derived from the English Wikipedia article on the Roman Empire. Each node represents a word (possibly non-unique) in the text, with features based on word embeddings. Node classes correspond to syntactic roles, with the 17 most frequent roles as distinct classes, and all others grouped into an 18th class. Following \citet{Platonov2023datapaper}, we use the fixed 10 random splits with a 50\%/25\%/25\% ratio for training, validation, and testing.

\textbf{Arxiv-year} \citep{lim2021large} is a citation network derived from a subset of the Microsoft Academic Graph, focusing on predicting the publication year of papers. Nodes represent papers, and edges indicate citation relationships. Node features are computed as the average of word embeddings from the titles and abstracts. Following \citet{lim2021large}, the dataset is split into training, validation, and testing sets with a 50\%/25\%/25\% ratio.

\subsection {Baselines}\label{app.baselines}
\textbf{DGI} \citep{velickovic2019dgi}: Deep Graph InfoMax (DGI) is a unsupervised learning method that maximizes mutual information between node embeddings and a global graph representation. It employs a readout function to generate the graph-level summary and a discriminator to distinguish between positive (original) and negative (shuffled) node-feature samples, enabling effective graph representation learning.

\textbf{GMI} \citep{peng2020gmi}: Graphical Mutual Information (GMI) measures the mutual information between input graphs and hidden representations by capturing correlations in both node features and graph topology. It extends traditional mutual information computation to the graph domain, ensuring comprehensive representation learning.

\textbf{MVGRL} \citep{hassani2020mvgrl}: Contrastive Multi-View Representation Learning (MVGRL) leverages multiple graph views generated through graph diffusion processes. It contrasts node-level and graph-level representations across these views using a discriminator, enabling robust multi-view graph representation learning.

\textbf{GRACE} \citep{zhu2020GRACE}: Graph contrastive representation learning (GRACE) model generates two correlated graph views by randomly removing edges and masking features. It focuses on contrasting node embeddings across these views using contrastive loss, maximizing their agreement while incorporating inter-view and intra-view negative pairs, without relying on injective readout functions for graph embeddings.

\textbf{GCA} \citep{zhu2021GCA}: Graph Contrastive Learning with Adaptive Augmentation (GCA) enhances graph representation learning by incorporating adaptive augmentation based on rich topological and semantic priors, enabling more effective graph-structured data representation.

\textbf{CCA-SSG} \citep{Zhang2021CCA_SSG}: Canonical Correlation Analysis inspired Self-Supervised Learning on Graphs (CCA-SSG) is a graph contrastive learning model that enhances node representations by maximizing the correlation between two augmented views of the same graph while reducing correlations across feature dimensions within each view.

\textbf{BGRL} \citep{thakoor2022BGRL}: Bootstrapped Graph Latents (BGRL) is a graph representation learning method that predicts alternative augmentations of the input using simple augmentations, eliminating the need for negative examples.

\textbf{AFGRL} \citep{Lee2022AFGRL}: Augmentation-Free Graph Representation Learning (AFGRL) builds on the BGRL framework, avoiding augmentation schemes by generating positive samples directly from the original graph. This approach captures both local structural and global semantic information, offering an alternative to traditional graph contrastive methods, though at the cost of increased computational complexity.

\textbf{L-GCL} \citep{zhang2022LGCL}: Localized Graph Contrastive Learning (LOCAL-GCL) is a unsupervised node representation learning method that samples positive samples from first-order neighborhoods and employs a kernelized negative loss to reduce training time.

\textbf{HGRL} \citep{Chen2022HGRL}: It is an unsupervised representation learning framework for graphs with heterophily that leverages original node features and high-order information. It learns node representations by preserving original features and capturing informative distant neighbors.

\textbf{DSSL} \citep{Teng2022DSSL}: Decoupled self-supervised learning (DSSL) is a flexible, encoder-agnostic representation learning framework that decouples diverse neighborhood contexts using latent variable modeling, enabling unsupervised learning without requiring augmentations.

\textbf{SP-GCL} \citep{Wang2023SPGCL}: Single-Pass Graph Contrastive Learning (SP-GCL) is a single-pass graph contrastive learning method that leverages the concentration property of node representations, eliminating the need for graph augmentations.

\textbf{GraphACL} \citep{Teng2023GraphACL}: Graph Asymmetric Contrastive Learning (GraphACL) is a simple and effective graph contrastive learning approach that captures one-hop neighborhood context and two-hop monophily similarities in an asymmetric learning framework, without relying on graph augmentations or homophily assumptions.

\textbf{PolyGCL} \citep{chen2024polygcl}: It is a graph contrastive learning pipeline that leverages polynomial filters with learnable parameters to generate low-pass and high-pass spectral views, achieving contrastive learning without relying on complex data augmentations.

\subsection{Hyperparameter Choices}\label{supp.hyperparameter}
We conduct our experiments on a machine equipped with an NVIDIA RTX A5000 GPU with 24 GB of memory. A small grid search is performed to identify the optimal hyperparameters. Specifically, we search for $T$ in $\{1,1.5,2,2.5,3,5,6,7,8,9,10,20,30\}$, $h$ in $\{0.1,0.15, 0.2, 0.25, 0.3,0.4,0.5, 1, 1.5, 2, 3, 5, 10\}$, and the hidden dimension $d$ in $\{256,512,1024,2048,4096\}$. 

Both $\alpha_{1}$ and $\alpha_{2}$ are initially considered tunable over the range $(0,1]$. The \emph{general idea} is to start searching with large $\alpha_2-\alpha_1$ (suggested by \cref{thm}). To simplify the search and maintain a consistent global view, we fix $\alpha_{2}=1$ and tune $\alpha_{1}$ over the range $(0,1]$ with a step of 0.01. Additionally, the learning rate is searched over $\{0.005, 0.01, 0.015,0.02\}$ and weight decay over $\{0.001, 0.0001, 0.0005, 0.00001\}$. Weight $\beta$ (for combining $\bZ_1,\bZ_2$) is adjusted within the range $[0,1]$ with step of $0.05$, and and the regularization weight $\eta$ is tuned within $[0,0.5)$ with step of $0.01$. The optimal configuration of hyperparameters is determined based on the average accuracy on the validation set. A detailed summary of the selected hyperparameters for each dataset is provided in \cref{tab:hyper_parameter}. 

\begin{table}[htb]
\caption{Details of the hyperparameters tuned by grid search on various datasets}
\centering
\setlength{\tabcolsep}{10pt}
\resizebox{0.8\textwidth}{!}{
\begin{tabular}{lccccccccccc}
\toprule
Datasets  & $T$ & $h$ & $d$ & $\text{lr}$ & $\text{weight decay}$ & $\text{epochs}$ &         $\alpha_{1}$ &  $\alpha_{2}$ & $\beta$ & $\eta$   \\
\midrule
Cora      & 20  & 1   & 256  &  0.01           & 0.0005         &  30               & 0.01      & 1      & 0.55 & 0.15   \\
Citeseer  & 6   & 0.4 & 2048 &  0.015          & 0.0005         &  15               & 0.08      & 1      & 0.55 & 0.15   \\
Pubmed    & 3   & 0.5 & 4096 &  0.02           & 0.0005         &  1                & 0.75      & 1      & 0.85 & 0.2    \\
Computer  & 3   & 0.5 & 2048 &  0.005          & 0.0005         &  1                & 0.94      & 1      & 0.96 & 0.15   \\
Photo     & 3   & 0.5 & 4096 &  0.005          & 0.0005         &  1                & 0.9       & 1      & 0.9  & 0.04   \\
Arxiv     & 30  & 3   & 256  &  0.01           & 0.0005         &  55               & 0.01      & 1      & 0.55 & 0.2    \\
\midrule
Squirrel    & 40  & 5       & 4096   &  0.01   & 0.0005        &  20                & 0.9       & 1      & 0.6  & 0.01    \\
Chameleon   & 30  & 5       & 4096   &  0.01   & 0.0005        &  20                & 0.9       & 1      & 0.9  & 0.05    \\
Crocodile   & 20  & 2       & 2048   &  0.01   & 0.0005        &  20                & 0.1       & 1      & 0.55 & 0.01    \\
Actor       & 1.5 & 0.15    & 2048   &  0.01   & 0.0005        &  5                 & 0.01      & 1      & 0.55 & 0.01    \\
Wisconsin   & 20  & 2       & 2048   &  0.01   & 0.0005        &  30                & 0.01      & 1      & 0.6  & 0.1     \\
Cornell     & 20  & 2       & 2048   &  0.01   & 0.0005        &  30                & 0.01      & 1      & 0.7  & 0.2     \\
Texas       & 30  & 10      & 2048   &  0.01   & 0.0005        &  30                & 0.01      & 1      & 0.6  & 0.01    \\
Roman       & 2   & 1.5     & 4096   &  0.01   & 0.0005        &  2                 & 0.001     & 1      & 0.5  & 0.05    \\
Arxiv-year  & 2   & 1       & 512    &  0.01   & 0.0005        &  2                 & 0.99      & 1      & 0.15  & 0.01    \\
\bottomrule
\end{tabular}}
\label{tab:hyper_parameter}
\end{table}
\subsection{Contrastive Loss Functions}\label{supp.loss}
In the absence of explicit negative samples, non-contrastive methods focus on maximizing agreement among positive samples. Examples include knowledge-distillation approaches, such as BGRL \citep{thakoor2022BGRL} (Cosine similarity-based method), and redundancy-reduction methods, including Barlow Twins \citep{zbontarbarlowtwin2021} and VICReg \citep{bardes2022vicreg}.
The definitions of Euclidean loss, Cosmean loss, Barlow Twins loss, and VICReg loss are provided below. 

Let $\bZ_{1}$ and $\bZ_{2}$ denote two feature representations fed into a contrastive loss function. Suppose each feature representation consists of $N$ samples, where $\bZ_{1,i}$ and $\bZ_{2,i}$ represent the feature vectors corresponding to the $i$-th sample in $\bZ_{1}$ and $\bZ_{2}$, respectively.

\textbf{Euclidean Loss} measures the squared $\ell_2$-norm between two feature representations $\bZ_{1}$ and $\bZ_{2}$, encouraging them to be as close as possible. It is defined as:
\begin{align*}
    \calL_{\text{Euclidean}} = \frac{1}{N} \sum_{i=1}^{N} \norm{\bZ_{1,i}-\bZ_{2,i}}_{2}^{2}.
\end{align*}

\textbf{Cosmean Loss} measures the cosine similarity between two feature representations $\bZ_{1}$ and $\bZ_{2}$, encouraging their alignment. It is mathematically expressed as
\begin{align}
    \calL_{\text{Cosmean}} = 1 -\frac{1}{N} \sum_{i=1}^{N} \frac{\angles{\bZ_{1,i},\bZ_{2,i}}}{\norm{\bZ_{1,i}}_{2}\norm{\bZ_{2,i}}_{2}}
\label{cosmean_loss}
\end{align}
where $\angles{\bZ_{1,i},\bZ_{2,i}}$ is the inner product of these two vectors $\bZ_{1,i}$ and $\bZ_{2,i}$, and $\norm{\bZ_{1,i}}_{2}$ and $\norm{\bZ_{1,i}}_{2}$ are their respective $\ell_2$-norms. The cosmean loss is minimized when the feature vectors are perfectly aligned in the same direction, achieving maximum cosine similarity.

\textbf{Barlow Twins Loss} \citep{zbontarbarlowtwin2021} is designed to encourage similarity between two feature representations $\bZ_{1}$ and $\bZ_{2}$, while reducing redundancy across dimensions within each representation. It is defined as:
\begin{align*}
    \calL_{\text{Barlow Twins}}=\sum_{i}\left(\left(1-\bC_{ii}\right)^{2}\right)+\lambda\sum_{i}\sum_{i\neq j} \bC_{ii}^2
\end{align*}
where $\bC = \frac{\bZ_{1}^{\T}\bZ_{2}}{N}$ is the cross-correlation matrix of the normalized feature representations $\bZ_{1}$ and $\bZ_{2}$, and $\lambda$ is a trade-off parameter. The first term minimizes the difference between the diagonal elements of $\bC$ and 1, ensuring the features from $\bZ_{1}$ and $\bZ_{2}$ are highly correlated. The second term minimizes the off-diagonal elements, promoting decorrelation between features and reducing redundancy. By balancing these two objectives, Barlow Twins Loss enables learning robust and diverse feature representations.

\textbf{VICReg Loss} (Variance-Invariance-Covariance Regularization Loss) \citep{bardes2022vicreg} is designed to align two feature representations, $\bZ_{1}$ and $\bZ_{2}$, while ensuring variance preservation and minimizing redundancy. It consists of three components: variance regularization, invariance loss, and covariance regularization. The loss is expressed as:
\begin{align*}
    \calL_{\text{VICReg}} = \eta_1 \calL_{\text{inv}} + \eta_2 \calL_{\text{var}} + \eta_3 \calL_{\text{cov}},
\end{align*}
where $\eta_1$, $\eta_2$ and $\eta_3$ are hyper-parameters controlling the relative contributions of each term. The invariance component $\calL_{\text{inv}}$ is computed as the mean-squared Euclidean distance of the corresponding samples from $\bZ_{1}$ and $\bZ_{2}$:
\begin{align*}
    \calL_{\text{inv}} = \frac{1}{N} \sum_{i=1}^{N} \norm{\bZ_{1,i}-\bZ_{2,i}}_{2}^{2}.
\end{align*}
The variance component $\calL_{\text{var}}$ ensures that each feature dimension in $\bZ_{1}$ and $\bZ_{2}$ has sufficient variance to potentially prevent collapse. It is given by
\begin{align*}
    \calL_{\text{var}} = \frac{1}{d} \sum_{j=1}^{d} \max(0,\varepsilon-\sqrt{\text{Var}(\bZ_{1}[:,j])}) + \max(0,\varepsilon-\sqrt{\text{Var}(\bZ_{2}[:,j])})
\end{align*}
where $\bZ_{1}[:,j]$ and $\bZ_{2}[:,j]$ are the $j$-th feature columns of $\bZ_{1}$ and $\bZ_{2}$, respectively. $\text{Var}(\bZ_{1}[:,j])=\frac{1}{N}\sum_{i=1}^{N}(\bZ_{1}[i,j]-\mu_{1,j})^{2}$ is the variance of the $j$-th feature in $\bZ_{1}$ (with $\mu_{1,j}=\frac{1}{N}\sum_{i=1}^{N}\bZ_{1}[i,j]$), and $\varepsilon$ is a small positive constant to enforce nonzero variance. And the covariance regularization term $\calL_{\text{cov}}$ reduces redundancy by decorrelating different feature dimensions within each representation. It is expressed as:
\begin{align*}
    \calL_{\text{cov}} = \frac{1}{d}\sum_{i\neq j}\left(\text{Cov}(\bZ_{1})_{j,k}^{2}+\text{Cov}(\bZ_{2})_{j,k}^{2}\right)
\end{align*}
where $\text{Cov}(\bZ_{l})_{j,k}=\frac{1}{N}\sum_{i=1}^{N}\left(\bZ_{l,i,j}-\mu_{l,j}\right)\left(\bZ_{l,i,k}-\mu_{l,k}\right)$ represents the off-diagonal elements of the covariance matrix for $\bZ_{l}$, with $l=1,2$. The hyperparameters $\eta_1$, $\eta_2$, and $\eta_3$ control the relative contributions of variance regularization, invariance loss, and covariance regularization, respectively. By balancing these three terms, VICReg achieves robust feature alignment while maintaining diversity and decorrelation, making it particularly effective for unsupervised learning tasks.

\cref{fig.accuracy_loss_cornell_squirrel} presents additional classification accuracy results on two benchmark datasets: Cornell and Squirrel, evaluated above various contrastive loss functions. Consistent with the findings in the main text, these results further underscore the effectiveness of our proposed Regularized Cosmean loss. Unlike other loss functions, which tend to experience performance degradation as training progresses, the Regularized Cosmean loss demonstrates superior stability by maintaining consistent accuracy across epochs. These results provide additional evidence of its ability to mitigate dimension collapse and ensure robust performance. 

\begin{figure}[htbp]
    \centering
    \begin{subfigure}[b]{0.48\textwidth} 
        \centering
        \includegraphics[width=\textwidth]{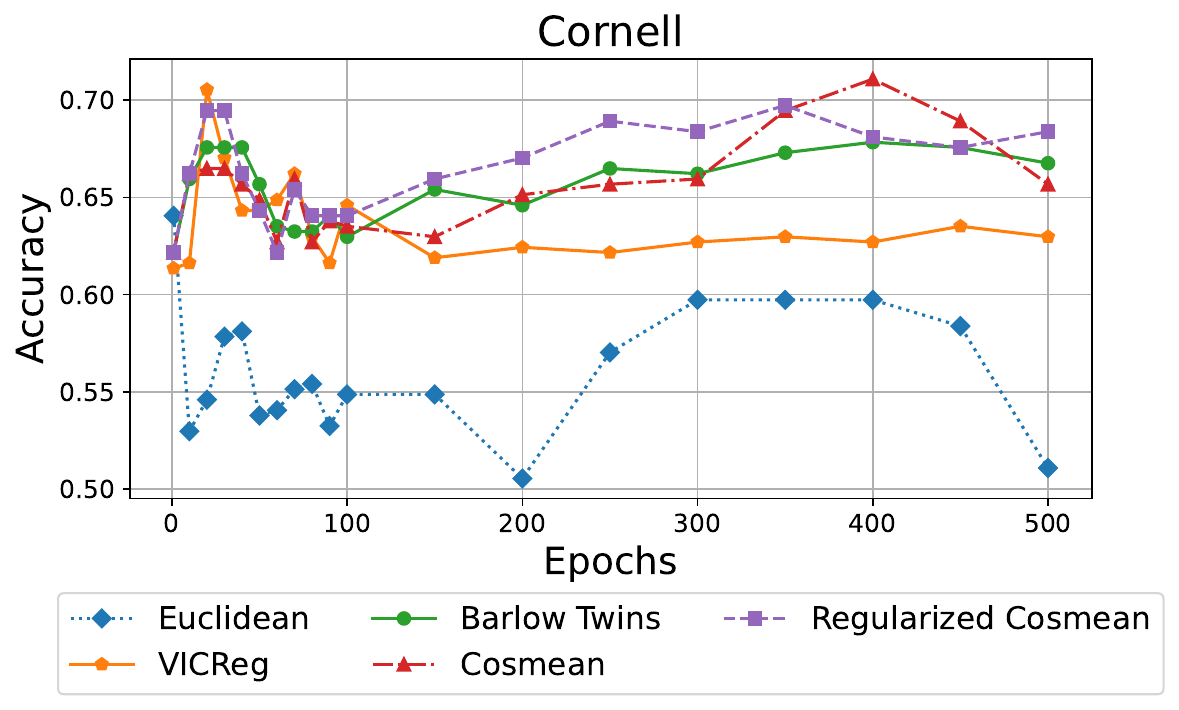} 
    \end{subfigure}
    \begin{subfigure}[b]{0.49\textwidth}
        \centering
        \includegraphics[width=\textwidth]{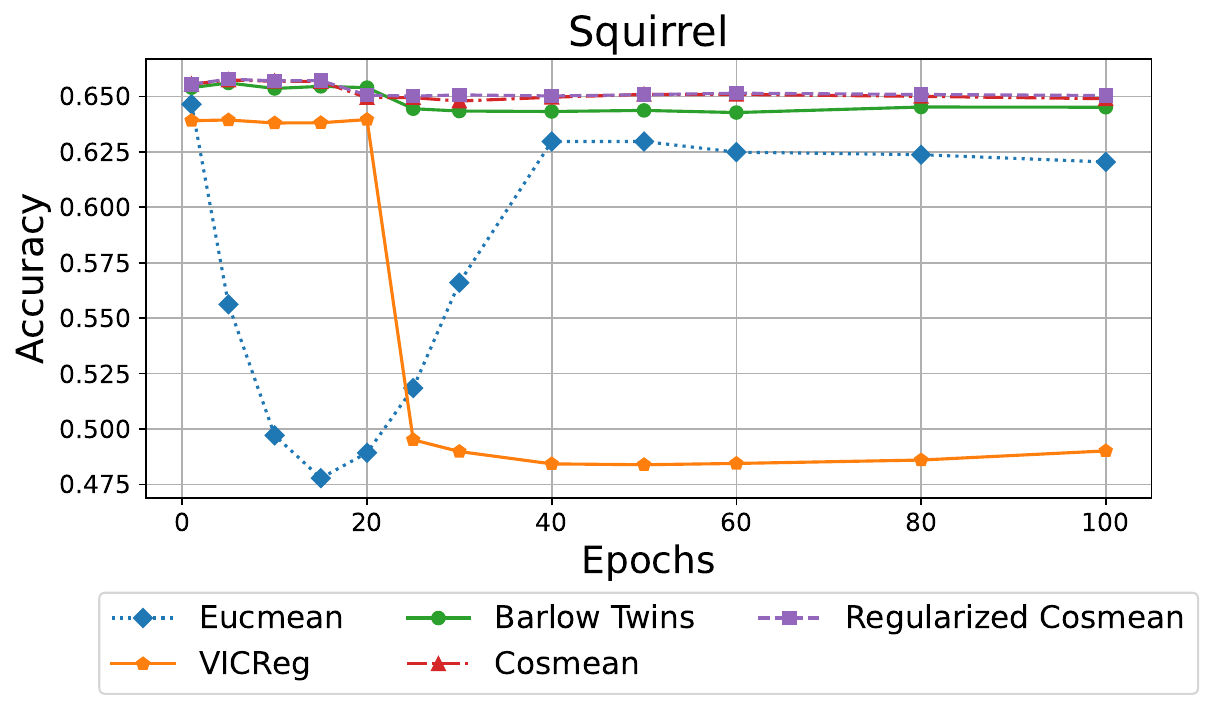}
    \end{subfigure}
    \caption{Accuracy vs. training epochs for various loss functions on the Cornell and Squirrel datasets}
    \label{fig.accuracy_loss_cornell_squirrel}
\end{figure}

\section{More Numerical Results} \label{sec:mnr}
\subsection{T-SNE Visualizations of Node Features}
In this section, we present additional t-SNE visualizations of node features for each class in the Cora (homophilic) and Wisconsin (heterophilic) datasets. These visualizations are generated using encoders with different FDE order parameters, revealing distinct embedding characteristics produced by the two encoders.
\begin{figure}[htbp]
    \centering
    \begin{subfigure}[b]{0.48\textwidth} 
        \centering
        \includegraphics[width=\textwidth]{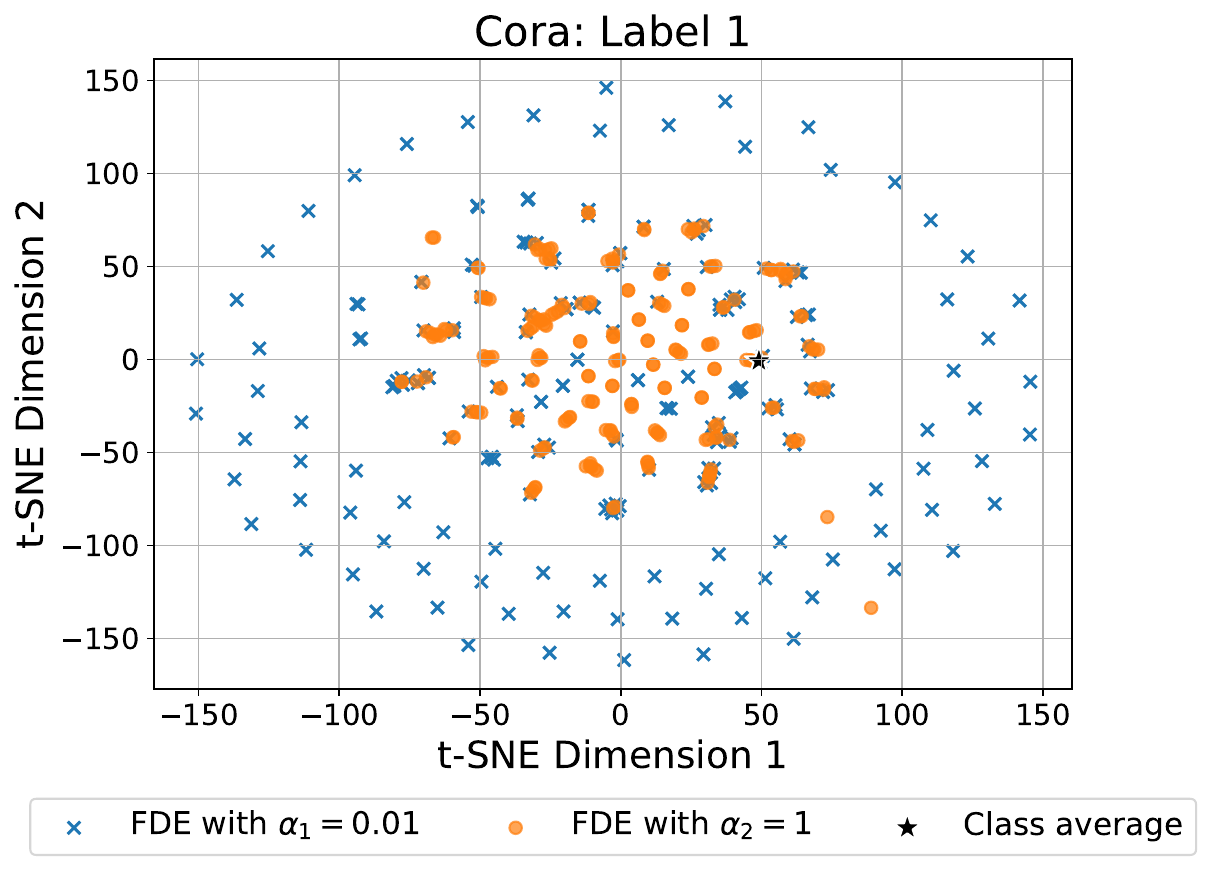}
    \end{subfigure}
    \begin{subfigure}[b]{0.48\textwidth}
        \centering
        \includegraphics[width=\textwidth]{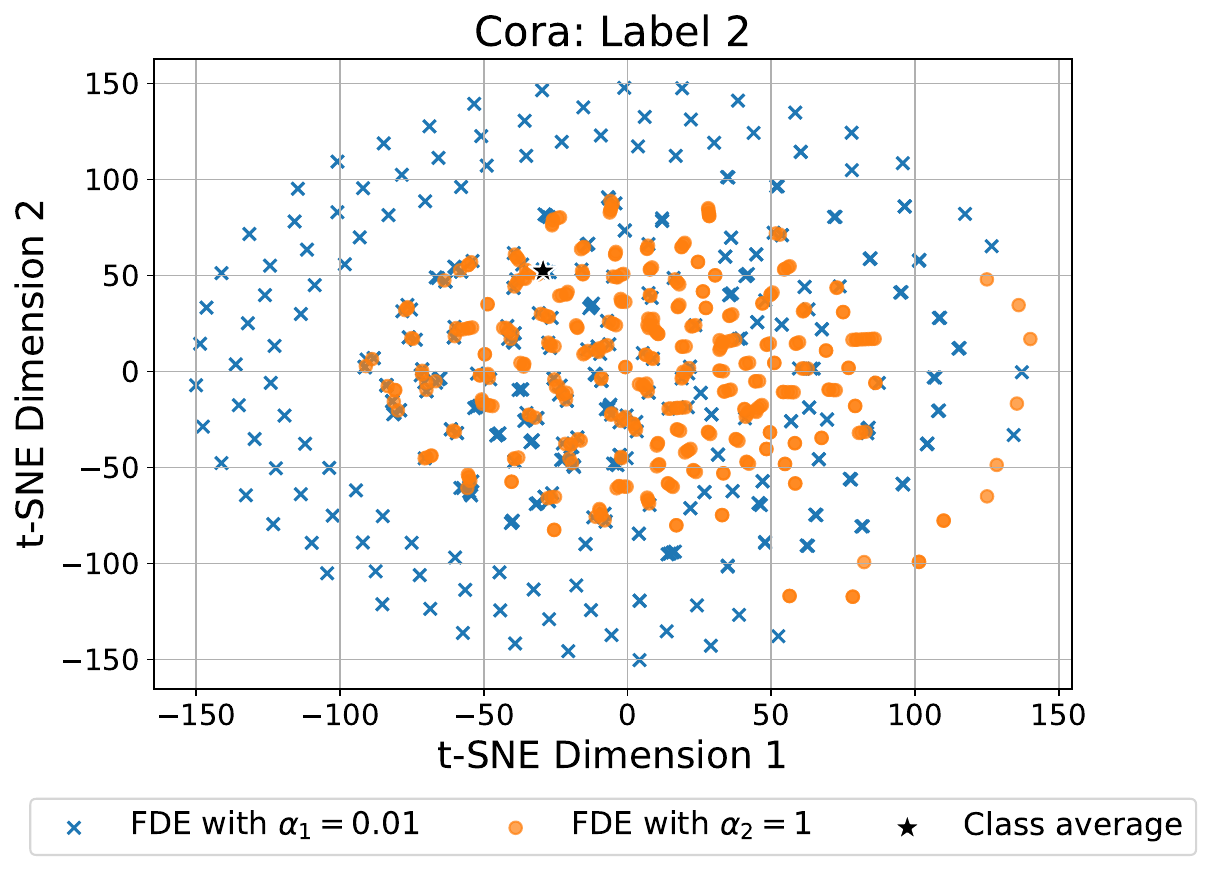}
    \end{subfigure}
    \caption{Cora: labels 1 and 2}
    \label{fig:tsne_cora_0}
\end{figure}

\begin{figure}[htbp]
    \centering
    \begin{subfigure}[b]{0.48\textwidth} 
        \centering
        \includegraphics[width=\textwidth]{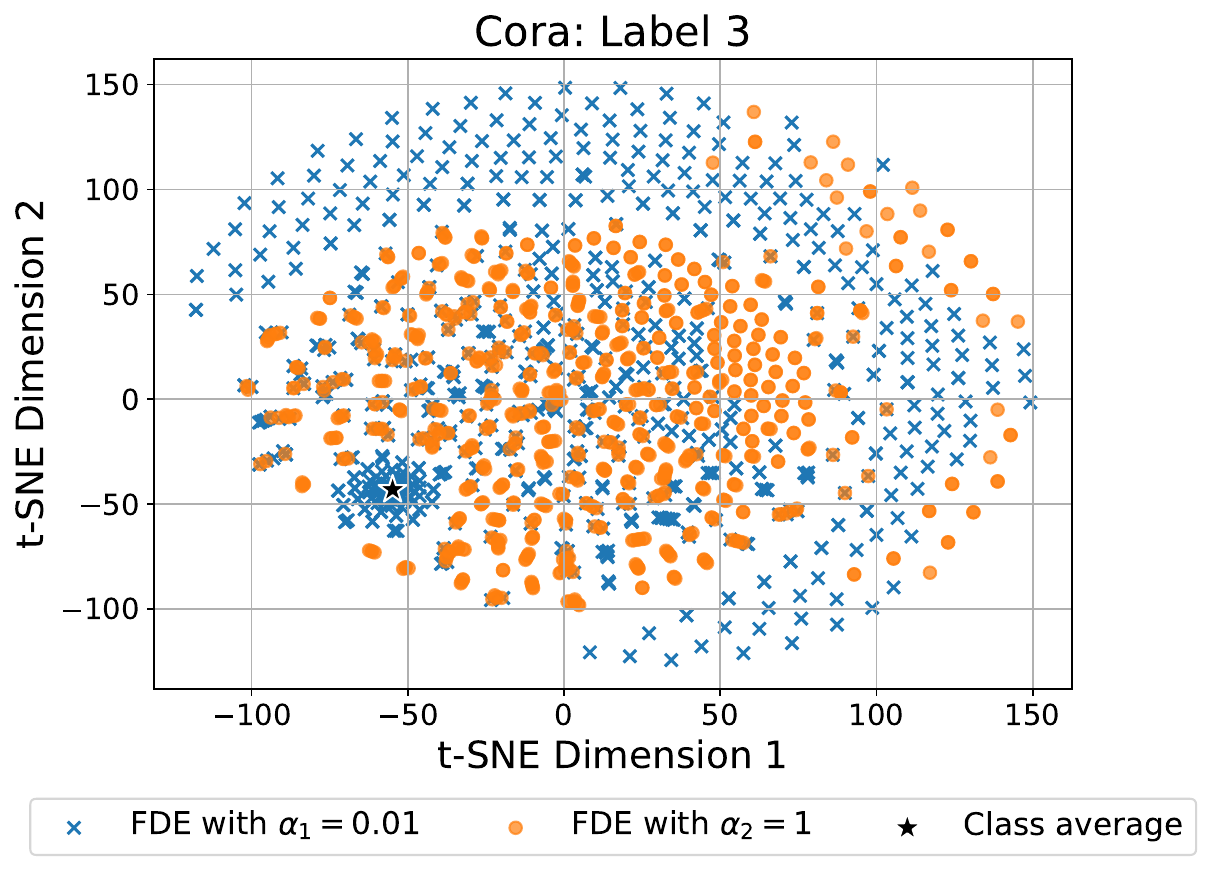} 
    \end{subfigure}
    \begin{subfigure}[b]{0.48\textwidth}
        \centering
        \includegraphics[width=\textwidth]{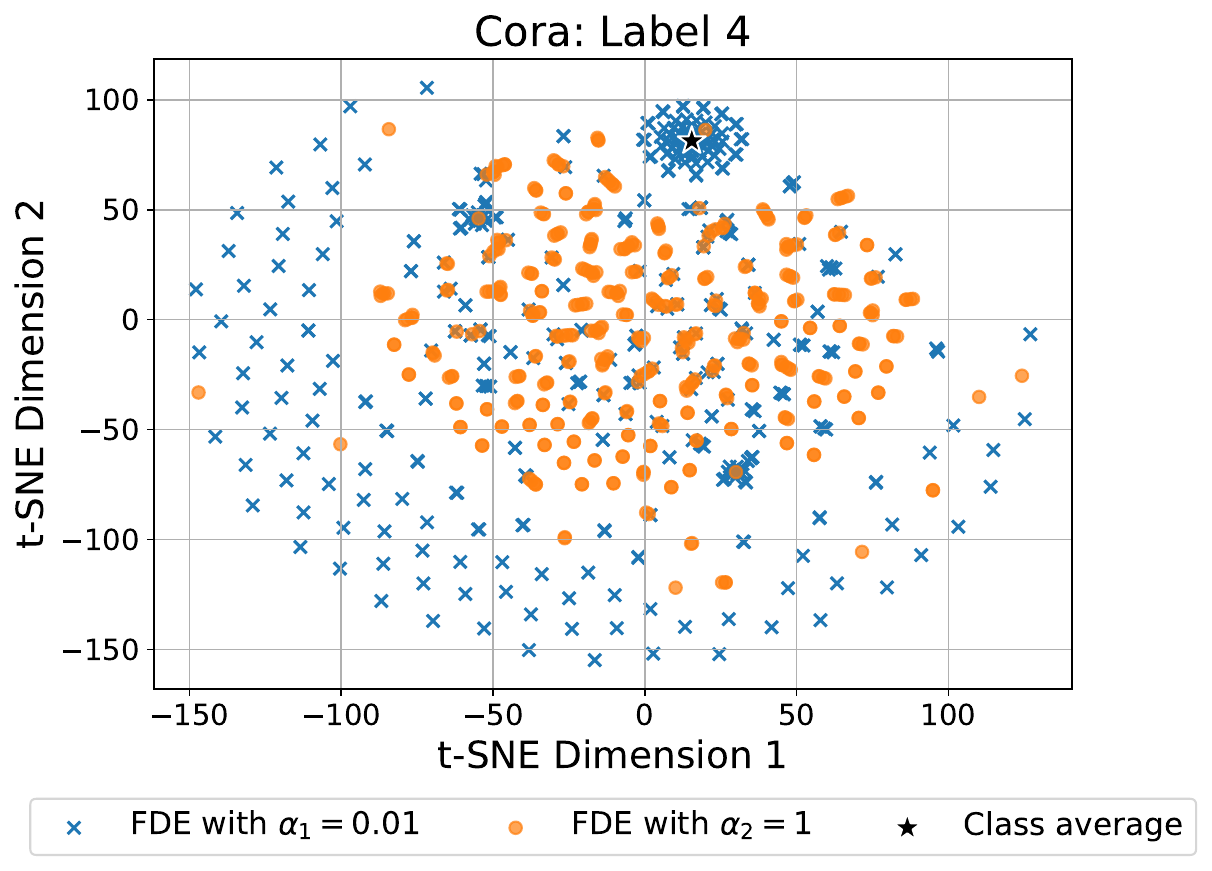}
    \end{subfigure}
    \caption{Cora: labels 3 and 4}
    \label{fig:tsne_cora_1}
\end{figure}

\begin{figure}[htbp]
    \centering
    \begin{subfigure}[b]{0.48\textwidth} 
        \centering
        \includegraphics[width=\textwidth]{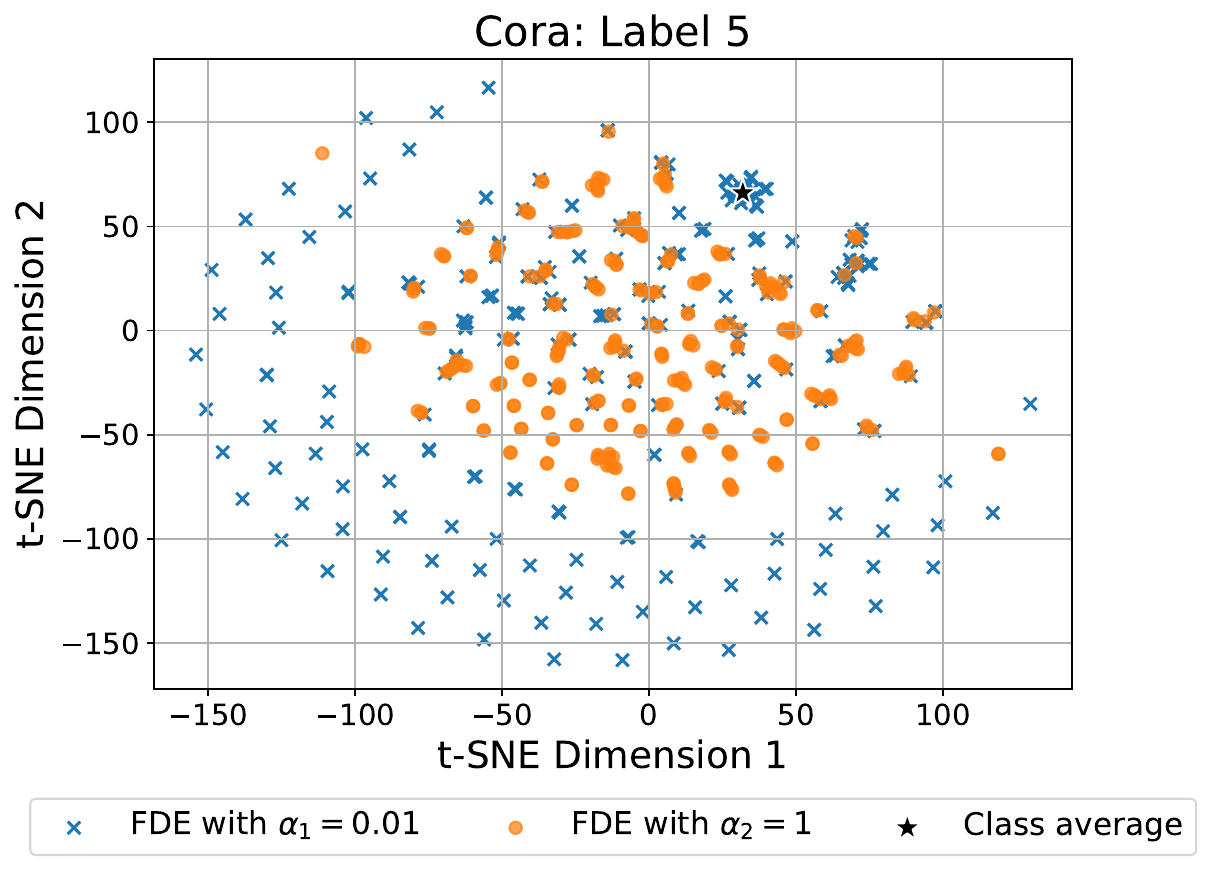} 
        \label{fig:sub1}
    \end{subfigure}
    \begin{subfigure}[b]{0.48\textwidth}
        \centering
        \includegraphics[width=\textwidth]{tsne_cora_label6.pdf}
        \label{fig:sub2}
    \end{subfigure}
    \caption{Cora: labels 5 and 6}
    \label{fig:tsne_cora_2}
\end{figure}

\begin{figure}[htbp]
    \centering
    \begin{subfigure}[b]{0.48\textwidth} 
        \centering
        \includegraphics[width=\textwidth]{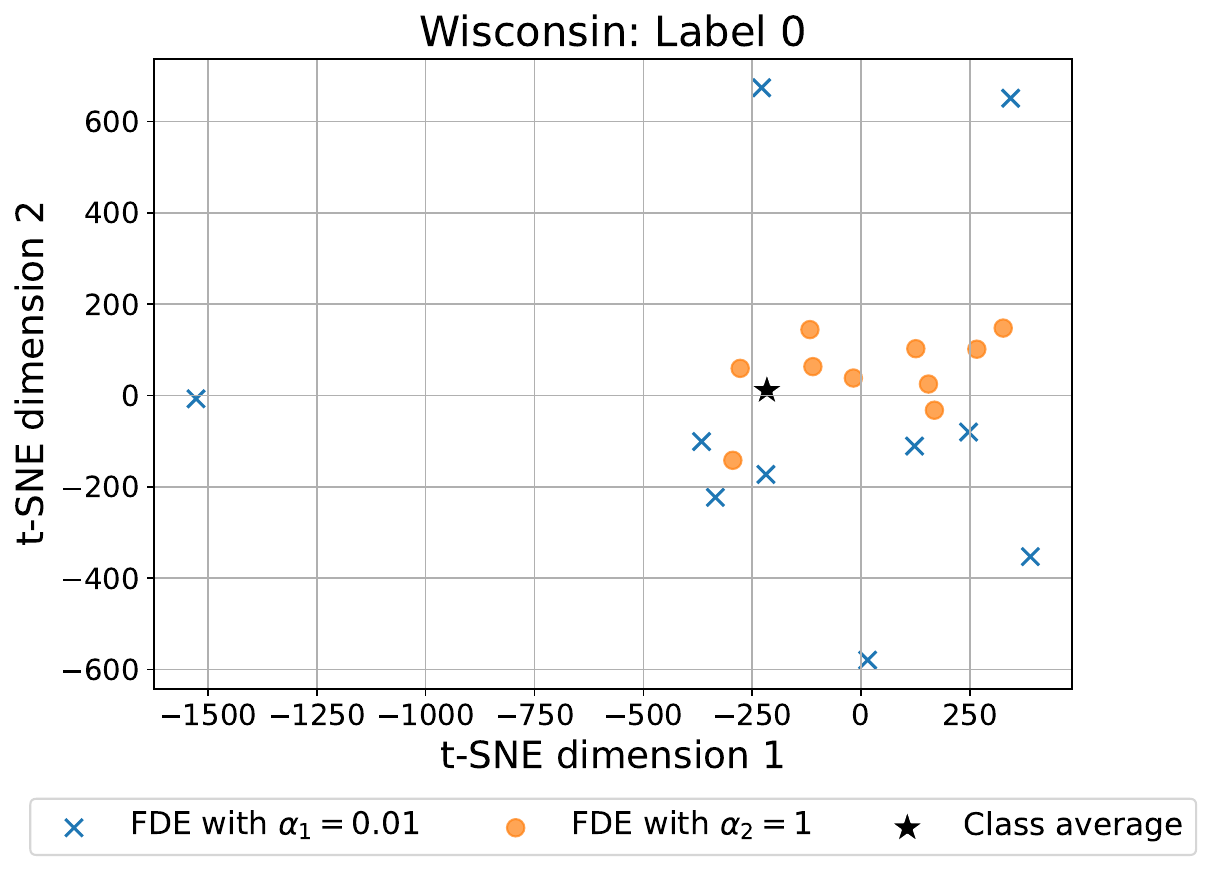} 
    \end{subfigure}
    \begin{subfigure}[b]{0.48\textwidth}
        \centering
        \includegraphics[width=\textwidth]{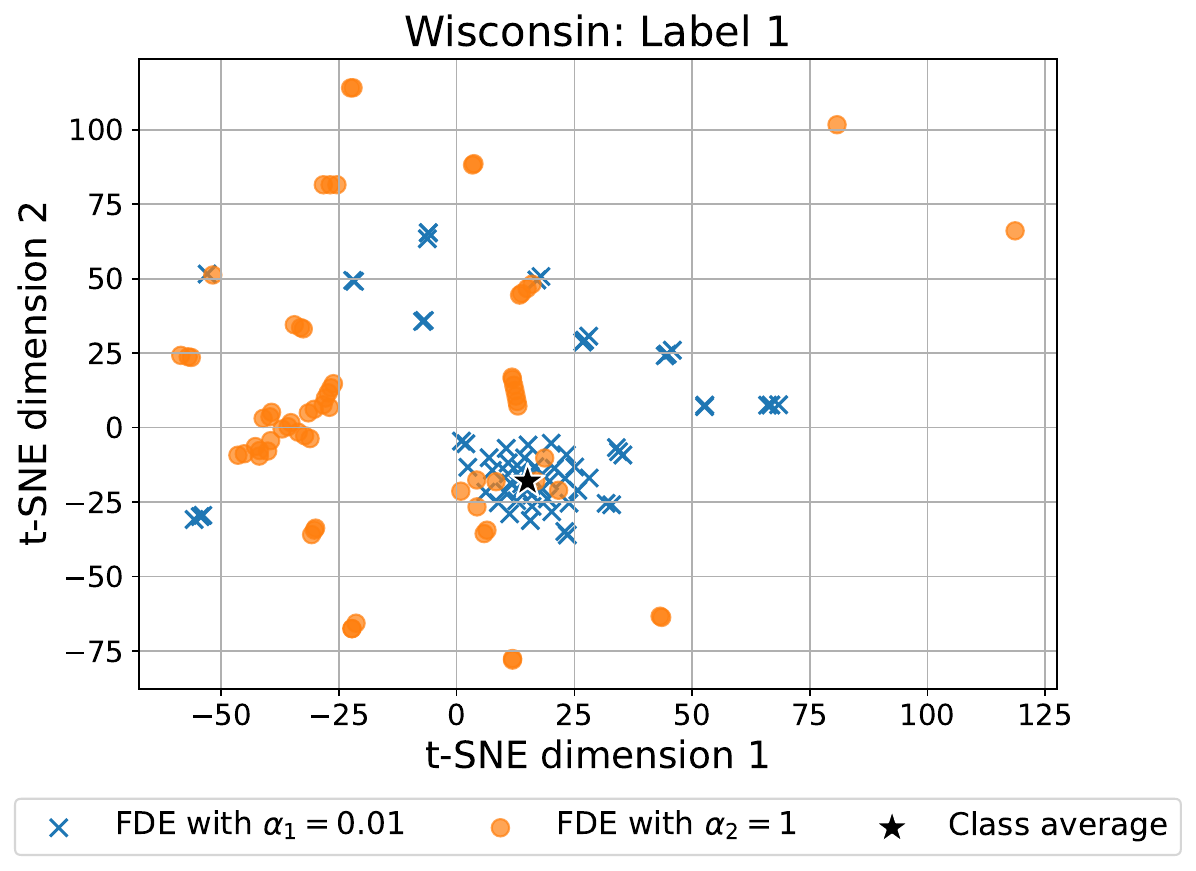}
    \end{subfigure}
    \caption{Wisconsin: labels 0 and 1}
    \label{fig:tsne_wis_1}
\end{figure}

\begin{figure}[htbp]
    \centering
    \begin{subfigure}[b]{0.48\textwidth} 
        \centering
        \includegraphics[width=\textwidth]{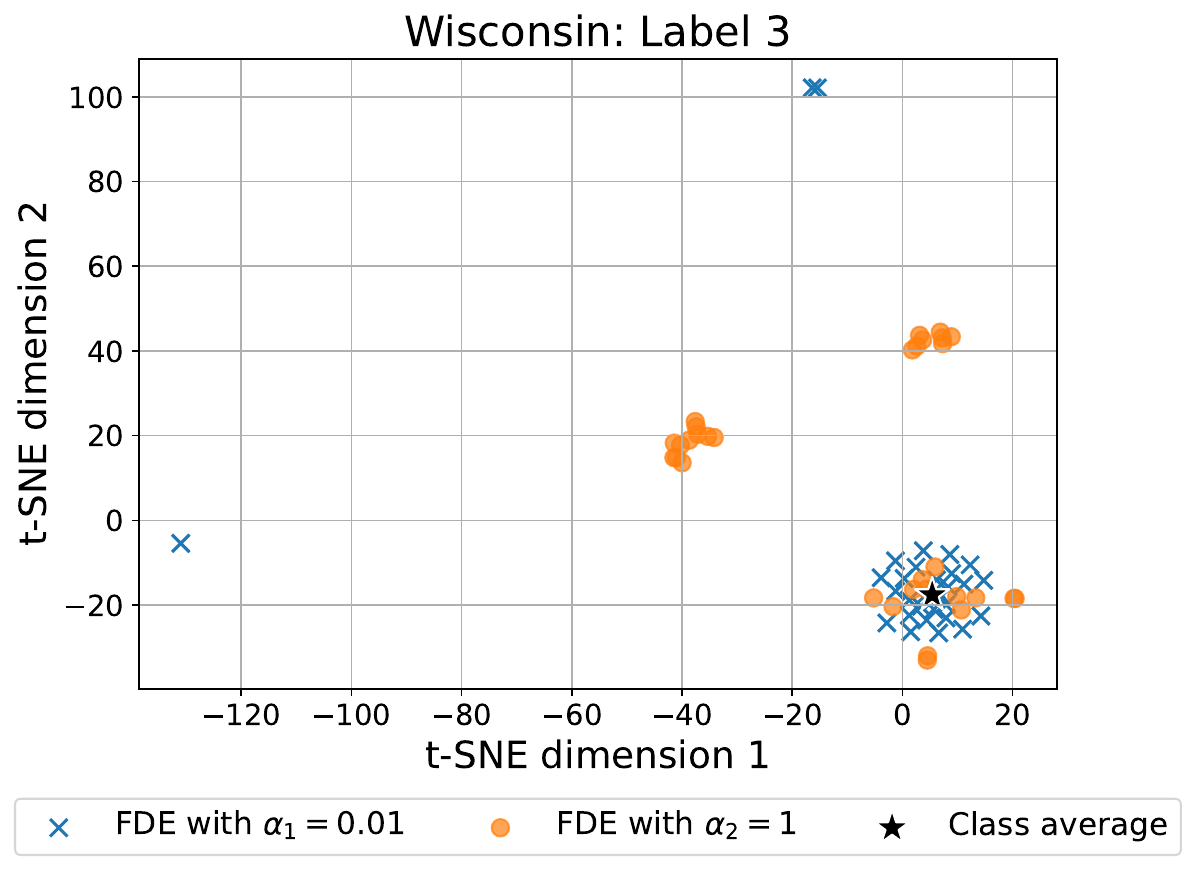} 
    \end{subfigure}
    \begin{subfigure}[b]{0.48\textwidth}
        \centering
        \includegraphics[width=\textwidth]{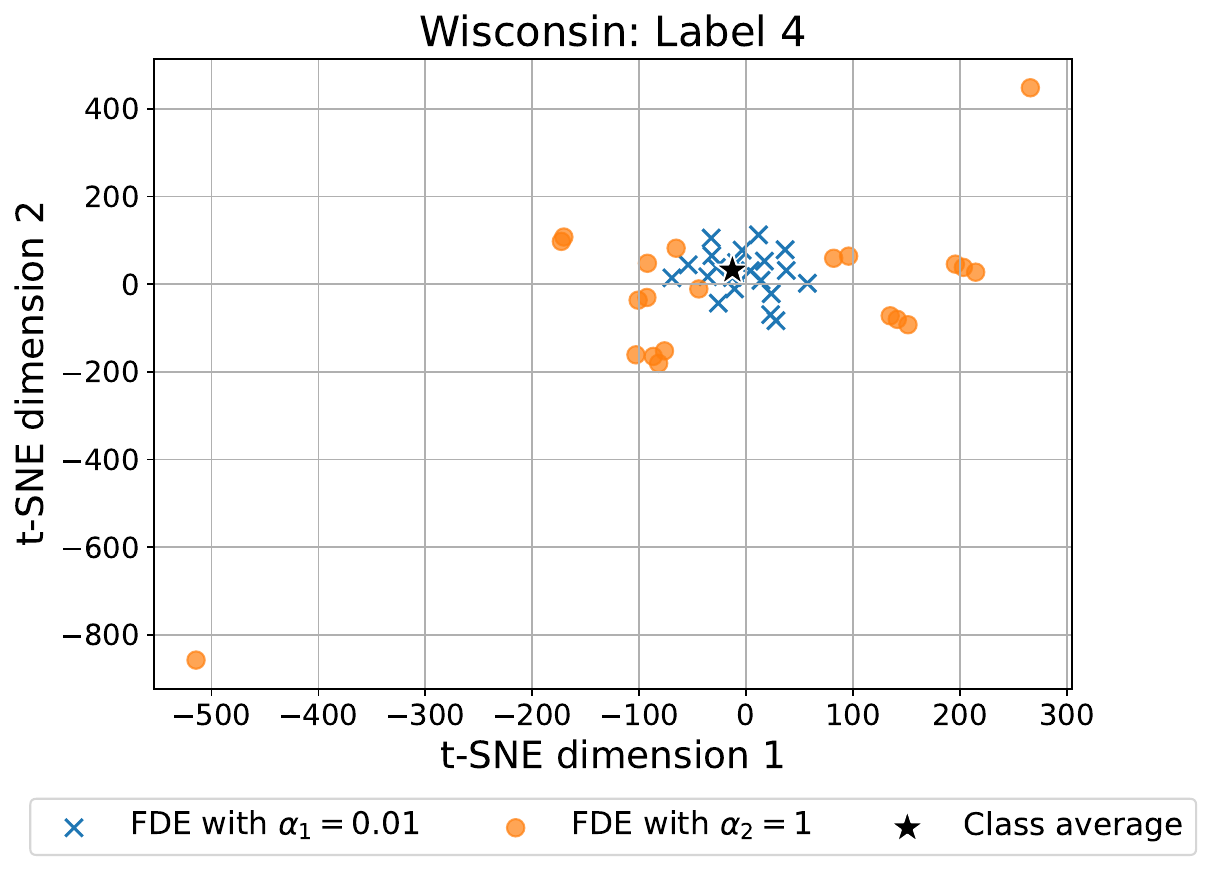}
    \end{subfigure}
    \caption{Wisconsin: labels 3 and 4}
    \label{fig:tsne_wis_2}
\end{figure}



\subsection{Node Classification Results for Different Diffusion Equations} \label{sec:diffe_model}
We also report node classification results using alternative choices for $\calF$ in \cref{eq.frond_main}, such as CDE \cref{eq.CDE} and GREAD \cref{eq.gread_diff}. The results, presented in \cref{tab:fd-gcl_model}, demonstrate the generalization and flexibility of FD-GCL.
\begin{table}[!htb]
\caption{Node classification results (\%) across different models in FD-GCL. OOM refers to out of memory.}
\centering
\begin{tabular}{llcccc}
\toprule
                         & \multicolumn{1}{c}{Model}       & Cora                     & Wisconsin             &Squirrel          & Roman\\
\midrule
\multirow{3}{*}{FD-GCL}  & GRAND                           & 84.20$\pm$0.22           & 79.22$\pm$5.13        &64.92$\pm$1.46     & 72.56$\pm$0.63 \\                               
                         & CDE                             & 71.79$\pm$0.14           & 74.31$\pm$4.42        & OOM               & OOM \\
                         & GREAD                           & 80.36$\pm$0.15           & 73.73$\pm$4.74        & 65.37$\pm$2.00    & 70.97$\pm$0.51 \\
\bottomrule
\end{tabular}
\label{tab:fd-gcl_model}
\end{table}
\subsection{Graph Classification Results}\label{supp.graphclassification}
Most existing works on heterophilic graphs primarily address node-level tasks, such as node classification, making empirical evaluation on graph-level tasks less straightforward. To adapt to graph classification, we employ a non-parametric graph pooling (readout) function, such as MeanPooling, to derive graph-level representations. We evaluate the performance of our method on two widely-used graph classification benchmarks: Proteins and DD. The results, summarized in \cref{tab:graphresults}, demonstrate that our FD-GCL framework is also effective for graph classification, delivering competitive performance compared to baseline methods.
\begin{table*}[!htb]
\caption{Graph classification results (\%). The best and the second-best result under each dataset are highlighted in \first{} and \second{}, respectively.}
\centering
\tiny
\setlength{\tabcolsep}{15pt}
\resizebox{0.8\textwidth}{!}{
\begin{tabular}{lcc}
\toprule
Method                                    & Proteins               & DD \\
\midrule
InfoGraph\citep{sun2019infograph}         & 74.44$\pm$0.40         & 72.85$\pm$1.70 \\
MVGRL\citep{hassani2020mvgrl}             & 74.02$\pm$0.30         & 75.20$\pm$0.40 \\
GraphCL\citep{You2020GraphCL}             & 74.39$\pm$0.45         & \second{78.62$\pm$0.40} \\
JOAO\citep{Youyoao2021}                   & 74.55$\pm$0.41         & 77.32$\pm$0.54 \\
JOAO2\citep{Youyoao2021}                  & 75.35$\pm$0.09         & 77.40$\pm$1.11 \\
SimGRACE\citep{XiaSimGrace2022}           & 75.35$\pm$0.09         & 77.44$\pm$1.11 \\
DRGCL\citep{JiDRGCL2024}                  & 75.20$\pm$0.60         & 78.40$\pm$0.70 \\
CI-GCL \citep{TanCIGCL2024}               & \first{76.50$\pm$0.10} & \first{79.63$\pm$0.30} \\
\hline
FD-GCL                                    & \second{75.40$\pm$0.28}& 78.53$\pm$0.36 \\
\bottomrule
\end{tabular}}
\label{tab:graphresults}
\end{table*}
\end{document}
